\newcommand{\bP}{P}
\newcommand{\bQ}{Q}
\newcommand{\var}{\mathbb{V}\text{ar}}
\newcommand{\Ep}{\mathbb{E}_{P}}
\newcommand{\Eq}{\mathbb{E}_{Q}}
\newcommand{\Kap}{\bm{\kappa}}
\newcommand{\ktilde}{\tilde{\kappa}}
\newcommand{\Ktilde}{\bm{\ktilde}}
\newcommand{\Ftilde}{\widetilde{F}}
\newcommand{\abs}[1]{\left|#1\right|}
\newcommand{\set}[1]{\left\{#1\right\}}
\newcommand{\eps}{\varepsilon}
\newcommand{\sample}{\texttt{Sample}_p}
\newcommand{\citep}{\cite}
\newcommand{\ew}{\mathtt{EW}}
\newcommand\diff{\mathrm{d}}
\newcommand{\EW}{\texttt{Edgeworth}\xspace}
\newcommand{\clt}{\texttt{CLT}\xspace}
\newcommand{\num}{\texttt{Numerical}\xspace}
\title{Sharp Composition Bounds for Gaussian Differential Privacy via Edgeworth Expansion}
\author{Qinqing Zheng\thanks{Department of Statistics. Email: {\tt zhengqinqing@gmail.com}.}
	\and Jinshuo Dong\thanks{Graduate Group in Applied Mathematics and Computational Science. Email: {\tt jinshuo@sas.upenn.edu}. }
	\and Qi Long\thanks{Department of Biostatistics, Epidemiology and Informatics. Email: {\tt qlong@pennmedicine.upenn.edu}.}
	\and Weijie J.~Su\thanks{Department of Statistics. Email: {\tt suw@wharton.upenn.edu}.}
	}
\date{}
\begin{document}
\maketitle

{\centering
\vspace*{-0.5cm}
\textit{University of Pennsylvania}\\
\par\bigskip
February 27, 2020
\par
}

\begin{abstract}
Datasets containing sensitive information are often sequentially analyzed by many
algorithms. This raises a fundamental question in differential privacy
regarding how the overall privacy bound degrades under composition. To address this question,
we introduce a family of analytical and sharp privacy bounds under composition using the
Edgeworth expansion in the framework of the recently proposed $f$-differential privacy.
In contrast to the existing composition theorems using the central limit theorem, our
new privacy bounds under composition gain improved tightness by leveraging the refined
approximation accuracy of the Edgeworth expansion. Our approach is easy to implement and
computationally efficient for any number of compositions. The superiority of these new
bounds is confirmed by an asymptotic error analysis and an application to quantifying
the overall privacy guarantees of noisy stochastic gradient descent used in training
private deep neural networks.
\end{abstract}

\section{Introduction}
Machine learning, data mining, and statistical analysis are widely applied to various
applications impacting our daily lives. While we celebrate the benefits brought by these
applications, to an alarming degree, the algorithms are accessing datasets containing
sensitive information such as individual behaviors on the web and health records. By
simply tweaking
the datasets and leveraging the output of algorithms, it is possible for an
adversary to learn information about and even identify certain individuals
\citep{fredrikson2015model,shokri2017membership}. In particular, privacy concerns become
even more acute when the same dataset is probed by a sequence of algorithms. With
knowledge of the dataset from the prior algorithms' output, an adversary can adaptively
analyze the dataset to cause additional privacy loss at each round. This reality raises
one of the most fundamental problems in the area of private data analysis:
\vspace{-5pt}
\begin{center}
    {\em How can we accurately and efficiently quantify the cumulative\\ privacy loss under composition of private algorithms?}
\end{center}
\vskip5pt

To address this important problem, one has to start with a formal privacy definition. To
date, the most popular statistical privacy definition is $(\epsilon,
\delta)$-differential privacy (DP) \citep{Dwork2006, Dwork2006_2}, with numerous
deployments in both industrial applications and academic research~\citep{erlingsson2014rappor,abadi2016deep,papernot2016semi,ding2017collecting,apple2017learning,abowd2018us}.
Informally, this privacy definition requires an unnoticeable change in a (randomized)
algorithm's output due to the replacement of any individual in the dataset. More
concretely, letting $\epsilon \ge 0$ and $0 \le \delta \le 1$, an algorithm $M$ is
$(\eps, \delta)$-differentially private if for any pair of neighboring datasets $S$,
$S'$ (in the sense that $S$ and $S'$ differs in only one individual) and any event $E$,
\begin{equation}\label{eq:epsilon_delta}
\P(M(S) \in E) \leq e^\eps \P(M(S') \in E) + \delta.
\end{equation}

Unfortunately, this privacy definition comes with a drawback when handling composition.
Explicitly, $(\epsilon, \delta)$-DP is \textit{not closed} under composition in the
sense that the overall privacy bound of a sequence of (at least two)
$(\epsilon,\delta)$-DP algorithms cannot be \text{precisely} described by a single pair
of the parameters $\epsilon, \delta$ \citep{kairouz2017composition}. Although the
precise bound can be collectively represented by infinitely many pairs of $\epsilon,
\delta$, \cite{complexity} shows that it is computationally hard to find such a pair of privacy parameters.

The need for a better treatment of composition has triggered a surge of interest in
proposing generalizations of $(\epsilon, \delta)$-DP, including divergence-based
relaxations \citep{concentrated,concentrated2,mironov2017renyi,tcdp} and, more recently,
a hypothesis testing-based extension termed $f$-differential privacy ($f$-DP)~\citep{dong2019gdp}. This privacy definition leverages the hypothesis testing
interpretation of differential privacy, and
characterizes the privacy guarantee using
the trade-off between type I and type II errors given by the associated hypothesis testing problem. As an advantage over the divergence-based privacy definitions, among others, $f$-DP allows for a concise and sharp argument for  privacy
amplification by subsampling. More significantly, $f$-DP is accompanied with a technique
powered by the central limit theorem (CLT) for analyzing privacy bounds under composition of
a \text{large} number of private algorithms. Loosely speaking, the overall privacy bound
asymptotically converges to the trade-off function defined by testing between two normal
distributions. This class of trade-off functions gives rise to \emph{Gaussian differential privacy} (GDP), a subfamily of $f$-DP guarantees.

In deploying differential privacy, however, the number of private algorithms under
composition may be \textit{moderate} or \textit{small} (see such applications in private
sparse linear regression \citep{kifer2012private} and personalized online advertising
\citep{lindell2011practical}). In this regime, the CLT phenomenon does not kick in and,
as such, the composition bounds developed using CLT can be inaccurate
\citep{dong2019gdp,bu2019deep}. To address this practically important problem, in this
paper we develop sharp and analytical composition bounds in $f$-DP without assuming a
larger number of algorithms, by leveraging the Edgeworth expansion~\citep{hall2013bootstrap}. The Edgeworth expansion is a technique for approximating probability
distributions in terms of their cumulants. Compared with the CLT approach, in our setting, this technique enables a significant reduction of approximation errors for composition theorems.

In short, our Edgeworth expansion-powered privacy bounds have a number of appealing properties, which will be shown in this paper through both theoretical analysis and numerical examples.
\begin{itemize}
    \item  The Edgeworth expansion is a more general approach that subsumes the CLT-based approximation. Moreover, our new privacy bounds tighten the composition bounds that are developed in the prior art \citep{dong2019gdp,bu2019deep}.

    \item  Our method is easy to implement and computationally efficient. In the case where all trade-off functions are identical under composition, the computational cost is constant regardless of the number of private algorithms. This case is not uncommon and can be found, for example, in the privacy analysis of noisy stochastic gradient descent (SGD) used in training deep neural networks.
\end{itemize}

The remainder of the paper is organized as follows. Section~\ref{sec:preliminary} reviews the
$f$-DP framework.  Section~\ref{sec:edgeworth} introduces our methods based on the
Edgeworth expansion.  Section~\ref{sec:param} provides a two-parameter interpretation of the Edgeworth approximation-based privacy guarantees.  Finally, we present experimental results to demonstrate the
superiority of our approach in Section~\ref{sec:expr}.

%
%

\section{Preliminaries on $f$-Differential Privacy}
\label{sec:preliminary}

Let a randomized algorithm $M$ take a dataset $S$ as input. Leveraging the output of this algorithm, differential privacy seeks to measure the difficulty of identifying the presence or absence
of any individual in $S$. The $(\eps, \delta)$-DP definition offers such a measure using the probabilities that $M$ gives the same outcome for two neighboring datasets $S$ and $S'$. A more concrete description is as follows. Let $P$ and $Q$ denote the
probability distribution of $M(S)$ and $M(S')$, respectively. To breach the privacy, in essence, an adversary performs the following hypothesis testing problem:
\[
    H_0: \text{output} \, \sim P
    \;\; \text{vs} \;\;
    H_1: \text{output} \, \sim Q.
\]
The privacy guarantee of $M$ boils down to the extent to which the adversary can tell the two distributions apart. In the case of $(\epsilon, \delta)$-DP, the privacy guarantee is expressed via \ref{eq:epsilon_delta}. The relationship between differential privacy and hypothesis testing is first studied in \citep{wasserman2010statistical, kairouz2017composition, liu2019investigating,
balle2019hypothesis}. More recently, \cite{dong2019gdp} proposes to use the trade-off
between type I and type II errors of the optimal likelihood ratio tests at level ranging from 0 to 1 as a measure of the privacy guarantee. Note that the optimal tests are given by the Neyman--Pearson lemma, and can be thought of as the most powerful adversary.


\paragraph{Trade-off function.}
Let $\phi$ be a rejection rule for testing against $H_0$ against $H_1$. The type I and type II error of $\phi$ are $\E_P(\phi)$ and $1 - \E_Q(\phi)$, respectively.
The trade-off function $T: [0,1] \rightarrow [0,1]$
between the two probability distributions $P$ and $Q$ is defined as
\[
T(P, Q)(\alpha) = \inf_{\phi} \set{1 - \E_Q(\phi): \E_P(\phi) \leq \alpha}.
\]
That is, $T[P,Q](\alpha)$ equals the minimum type II error that one can achieve at significance level $\alpha$.
A larger trade-off function corresponds to a more difficult hypothesis testing problem, thereby implying more privacy of the associated private algorithm. When the two distributions are the same, the perfect privacy is achieved and
the corresponding trade-off function is $T(P,P)(\alpha) = 1 - \alpha$. In the sequel, we denote this function by $\text{Id}(\alpha)$. With the definition of trade-off functions in place, \cite{dong2019gdp} introduces the following privacy definition (we say $f \ge g$ if $f(\alpha) \ge g(\alpha)$ for all $0 \le \alpha \le 1$):
\begin{definition}\label{def:f-dp}
Let $f$ be a trade-off function. An algorithm $M$ is $f$-differentially private if $T(M(S), M(S')) \geq f$ for any pair of neighboring datasets $S$ and $S'$.
\end{definition}

While the definition above considers a general trade-off function, it is worthwhile
noting that $f$ can always be assumed to be \textit{symmetric}. Letting $f^{-1}(\alpha) := \inf\{0 \leq t \leq 1: f(t) \leq \alpha\}$ (note that $f^{-1} = T(Q, P)$ if $f = T(P, Q)$), a trade-off function $f$ is said to be symmetric if $f = f^{-1}$. Due to the symmetry of the two neighboring datasets in the privacy definition, an $f$-DP algorithm must be $\max\{f, f^{-1}\}$-DP. Compared to $f$, the new trade-off function $\max\{f, f^{-1}\}$ is symmetric and gives a greater or equal privacy guarantee. For the special case where
the lower bound in Definition~\ref{def:f-dp} is a trade-off function between two Gaussian distributions, we say that the algorithm has \emph{Gaussian differential privacy} (GDP):
\begin{definition}
Let $G_\mu := T(\N(0,1), \N(\mu, 1)) \equiv \Phi(\Phi^{-1}(1-\alpha) - \mu)$ for some
$\mu \geq 0$, where $\Phi$ denotes the cumulative distribution function (CDF) of the
standard normal distribution. An algorithm $M$ gives $\mu$-GDP if $T(M(S), M(S')) \geq G_\mu$ for any pair of neighboring datasets $S$ and $S'$.
\end{definition}

\begin{figure}[t]
    \centering
    \includegraphics[width=0.45\columnwidth]{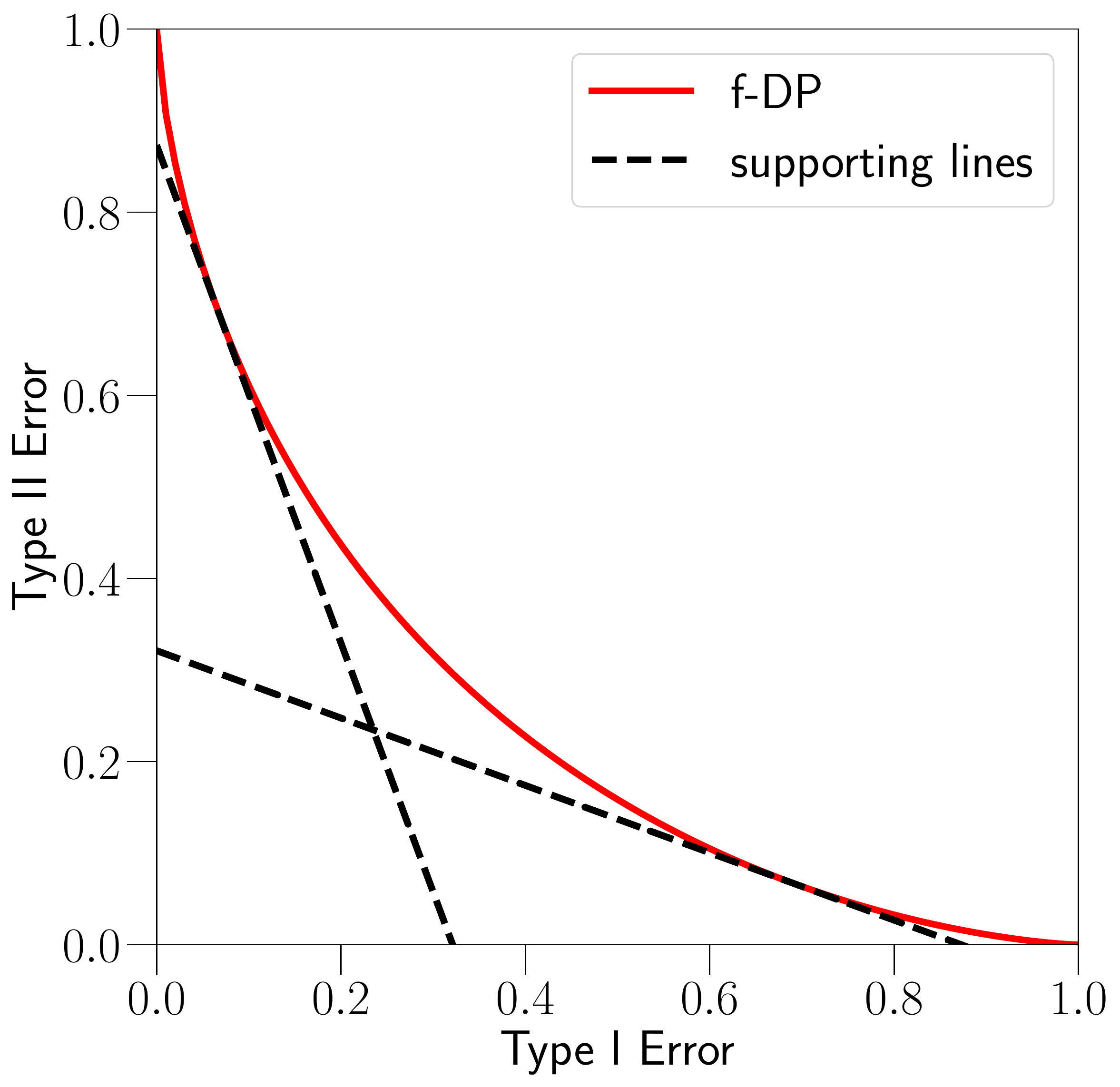}
    \vskip-5pt\caption{
    An example of a trade-off function and two supporting lines induced by the associated $(\eps, \delta(\eps))$-DP guarantees. These lines
    have slopes $-e^{\pm \eps}$, respectively, and intercepts $1-\delta$.
    }
    \label{fig:primal_dual}
\end{figure}
\paragraph{Duality to $(\eps, \delta)$-DP.}
The $f$-DP framework has a dual relationship with $(\eps,
\delta)$-DP in the sense that $f$-DP is equivalent to an infinite collection of $(\eps,
\delta)$-DP guarantees via the convex conjugate of $f$.
One can view $f$-DP as the primal representation of privacy, and accordingly, its
dual representation is the collection of $(\eps, \delta)$-DP guarantees.
In this paper, the Edgeworth approximation addresses
$f$-DP from the primal perspective. However, it is also instructive to check the
dual presentation. The following propositions introduce how to convert the primal to the dual, and vice versa. Geometrically, each associated $(\eps, \delta)$-DP guarantee defines two symmetric supporting linear functions to $f$ (assuming $f$ is symmetric). See Figure~\ref{fig:primal_dual}.
\begin{proposition}[Primal to Dual]
\label{prop:primaldual}
    For a symmetric trade-off function $f$, let $f^*: \R \rightarrow \R$ be its convex conjugate function
    $f^*(y)=\sup_{0 \leq x \leq 1} yx - f(x)$.  A mechanism is $f$-DP if and only if it is
    $(\eps, \delta(\eps))$-DP for all $\eps \geq 0$ with $\delta(\eps) = 1 + f^*(-e^\eps)$.
\end{proposition}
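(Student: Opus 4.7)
The approach is to translate $(\eps,\delta)$-DP into two affine lower bounds on the trade-off function $f_M := T(M(S), M(S'))$, then recognize the associated constants as values of the convex conjugate $f^*$ and invoke Fenchel--Moreau. I would begin with the standard reformulation that $M$ is $(\eps,\delta)$-DP if and only if, for all neighboring $S, S'$ and all $\alpha \in [0,1]$,
\[
f_M(\alpha) \ \geq\ 1 - \delta - e^\eps \alpha \quad\text{and}\quad f_M(\alpha) \ \geq\ e^{-\eps}(1-\delta-\alpha),
\]
the second inequality arising from swapping the roles of $S$ and $S'$. Each is an affine minorant of $f_M$ with slope $-e^\eps$ and $-e^{-\eps}$, respectively.

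For the primal-to-dual direction, assume $M$ is $f$-DP, so $f_M \ge f$. Rewrite the first inequality as $-e^\eps \alpha - f_M(\alpha) \le \delta - 1$ and take the supremum over $\alpha \in [0,1]$ to obtain $f_M^*(-e^\eps) \le \delta - 1$. Since the Legendre--Fenchel conjugate is order-reversing, $f_M \ge f$ implies $f^*(-e^\eps) \ge f_M^*(-e^\eps)$, so the choice $\delta(\eps) = 1 + f^*(-e^\eps)$ validates the first affine bound for every $\eps \ge 0$. The second affine bound then follows automatically from the symmetry of $f$.

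For the dual-to-primal direction, suppose $M$ is $(\eps,\, 1 + f^*(-e^\eps))$-DP for every $\eps \ge 0$. Taking the pointwise supremum of both families of affine bounds over $\eps \ge 0$ yields
\[
f_M(\alpha) \ \geq\ \sup_{\eps \ge 0}\bigl(-e^\eps \alpha - f^*(-e^\eps)\bigr) \ =\ \sup_{y \le -1}\bigl(y\alpha - f^*(y)\bigr),
\]
together with a second analogous supremum coming from the other affine bound. Under the substitution $y' = 1/y$ with $y = -e^\eps \le -1$, the second family collects affine functions of slope $y' \in [-1, 0)$ that are precisely the reflections through the line $y=x$ of the lines in the first family; by symmetry $f = f^{-1}$, these reflected lines are also affine minorants of $f$. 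Combining, the two families together exhaust all affine minorants of $f$ with negative slope, so by Fenchel--Moreau applied to the convex, lower semicontinuous, decreasing function $f$ on $[0,1]$, their joint pointwise supremum equals $f^{**} = f$, giving $f_M \ge f$.

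The main obstacle is precisely this last step in the dual-to-primal direction: showing that restricting the conjugate-variable supremum to $y \le -1$, together with its reflection, recovers the full biconjugate. Without the symmetry hypothesis $f = f^{-1}$ one could only reconstruct $f$ on the ``left'' portion where the subgradient has slope $\le -1$; symmetry is what lets the reflected family recover $f$ on the complementary portion and hence on all of $[0,1]$, closing the argument.
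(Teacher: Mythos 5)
Your proof is correct and takes the same route as the source reference \cite{dong2019gdp}; the paper itself does not reprove Proposition~\ref{prop:primaldual}, citing it as a known fact, although the convex-conjugacy step of your primal-to-dual direction (translating $\delta \ge 1 + f^*(-e^\eps)$ into the affine bound $f(\alpha) \ge 1 - \delta - e^\eps\alpha$) appears verbatim in the proof of Lemma~\ref{lem:delta_1}. Two small points worth tightening: (i) the phrase ``the two families together exhaust all affine minorants of $f$ with negative slope'' is slightly loose --- what you actually show is that they contain a \emph{supporting} line at every point of $[0,1]$, which suffices since a proper, lsc, convex function on a compact interval is the pointwise supremum of its supporting lines; the clean version of this is that the subgradient of a symmetric $f$ lies in $(-\infty,-1]$ on $[0,\alpha^*]$ (so $\ell_\eps$ with $e^\eps \in \partial f(\alpha)$ attains $f(\alpha)$ there) and in $[-1,0]$ on $[\alpha^*,1]$, where the reflected family $\ell'_\eps = \ell_\eps^{-1}$ takes over via $f = f^{-1}$. (ii) In the primal-to-dual direction, you should note explicitly that ``the second affine bound follows from symmetry of $f$'' rests on the order-preservation of the inverse operation: $f \ge \ell_\eps$ implies $f^{-1} \ge \ell_\eps^{-1}$, hence $f = f^{-1} \ge \ell'_\eps$, and then $f_M \ge f \ge \ell'_\eps$; without the detour through $f$'s symmetry this would not follow for $f_M$ alone, since $f_M$ need not be symmetric.
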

\begin{proposition}[Dual to Primal]
\label{prop:dualprimal}
    Let $I$ be an arbitrary index set such that each $i \in I$
is associated with $\eps_i \in [0, \infty)$ and $\delta_i \in [0, 1]$.
A mechanism is $(\eps_i, \delta_i)$-DP
for all $i \in I$ if and only if it is $f$-DP with $f = \sup_{i \in I} f_{\eps_i, \delta_i}$, where
$f_{\eps, \delta}(\alpha) = \max\set{0, 1 -\delta - e^{\eps}\alpha, e^{-\eps}(1 - \delta - \alpha)}$ is the trade-off function corresponding to $(\epsilon, \delta)$-DP.
\end{proposition}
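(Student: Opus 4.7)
The plan is to reduce the bidirectional statement to a single ``atomic'' equivalence between scalar $(\eps,\delta)$-DP and $f_{\eps,\delta}$-DP, and then promote it by taking a pointwise supremum. Concretely, I would first establish the lemma: for each fixed pair $(\eps,\delta)$, a mechanism $M$ is $(\eps,\delta)$-DP if and only if $T(M(S), M(S')) \geq f_{\eps,\delta}$ for every pair of neighboring datasets $S, S'$. Granted this, the proposition follows because
\[
T(M(S), M(S')) \geq f_{\eps_i,\delta_i} \text{ for every } i \in I \iff T(M(S), M(S')) \geq \sup_{i \in I} f_{\eps_i,\delta_i},
\]
and the pointwise supremum $f := \sup_{i \in I} f_{\eps_i,\delta_i}$ is again a valid trade-off function, since convexity, monotonicity, and the bound $f \leq \text{Id}$ are each preserved under pointwise suprema, and these properties characterize trade-off functions (continuity comes for free given the piecewise-linear structure).

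For the atomic equivalence, fix neighbors $S, S'$ and write $P = M(S)$, $Q = M(S')$. The $(\Rightarrow)$ direction is the main calculation. A layer-cake argument
\[
\E_Q(\phi) = \int_0^1 \P_Q(\phi \geq t) \diff t \leq \int_0^1 \bigl(e^\eps \, \P_P(\phi \geq t) + \delta\bigr) \diff t = e^\eps \E_P(\phi) + \delta
\]
lifts the event-based inequality \eqref{eq:epsilon_delta} to arbitrary rejection rules $\phi:\Omega \to [0,1]$, and the symmetric bound $\E_P(\phi) \leq e^\eps \E_Q(\phi) + \delta$ follows by swapping the roles of $S$ and $S'$. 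Applying the first inequality to $\phi$ with $\E_P(\phi) \leq \alpha$ gives $1 - \E_Q(\phi) \geq 1 - \delta - e^\eps \alpha$, while applying the second to $1 - \phi$ gives $1 - \E_Q(\phi) \geq e^{-\eps}(1 - \delta - \alpha)$. Combined with the trivial bound $1 - \E_Q(\phi) \geq 0$, this yields $T(P, Q)(\alpha) \geq f_{\eps,\delta}(\alpha)$. The $(\Leftarrow)$ direction is then straightforward: for any event $E$, taking $\phi = \mathbf{1}_E$ and $\alpha = \P_P(E)$ in $T(P,Q) \geq f_{\eps,\delta}$ produces $\P_Q(E) \leq 1 - f_{\eps,\delta}(\P_P(E)) \leq \delta + e^\eps \P_P(E)$.

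The main obstacle I expect is in the $(\Rightarrow)$ direction of the atomic lemma: justifying the passage from the event-based definition \eqref{eq:epsilon_delta} to a statement about arbitrary $[0,1]$-valued (possibly externally randomized) rejection rules. The layer-cake identity handles measurable $\phi$ cleanly, but if $\phi$ uses auxiliary randomness one needs a brief Fubini argument, and one must be careful that $\{\phi \geq t\}$ defines an event in the appropriate product space to which \eqref{eq:epsilon_delta} applies. Once this measure-theoretic step is settled, the remaining pieces, namely the algebraic manipulation of the piecewise-linear function $f_{\eps,\delta}$, the verification that a pointwise supremum of trade-off functions is again a trade-off function, and the final supremum argument, are all routine.
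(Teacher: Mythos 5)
The paper itself does not prove Proposition~\ref{prop:dualprimal}; it is stated in the preliminaries and imported from \cite{dong2019gdp} without argument, so there is no paper proof to compare against. Judged on its own terms, your proof is correct and is essentially the standard argument from that reference: establish the scalar equivalence between $(\eps,\delta)$-DP and $f_{\eps,\delta}$-DP via the layer-cake identity and its symmetric counterpart, then pass to the supremum. Two small points. First, the worry you flag about externally randomized rejection rules is a non-issue in this framework: a rejection rule is by definition a measurable map $\phi:\Omega\to[0,1]$ on the mechanism's output space (the value being the probability of rejection), so $\{\phi\geq t\}$ is automatically a measurable event to which \eqref{eq:epsilon_delta} applies, and no auxiliary product space or Fubini step is needed. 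Second, the remark that continuity of $f=\sup_i f_{\eps_i,\delta_i}$ ``comes for free given the piecewise-linear structure'' is imprecise, since a pointwise supremum of piecewise-linear functions over an infinite index set is generally neither piecewise linear nor a priori continuous; the correct justification is that the supremum is lower semi-continuous (as a sup of continuous functions), convex and hence continuous on the open interval, non-increasing, and squeezed between $0$ and $\mathrm{Id}$, which together force continuity at both endpoints of $[0,1]$. With those clarifications the argument is complete.
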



Next, we introduce how $f$-DP guarantees degrade under composition. With regard to composition, SGD offers an important benchmark for testing a privacy definition. As a popular optimizer for training deep neural networks, SGD outputs a series of models that are generated from the \emph{composition} of many gradient descent updates. Furthermore, each step of update is computed from a \emph{subsampled} mini-batch of data points. While composition degrades the privacy, in contrast, subsampling amplifies the privacy as individuals uncollected in the mini-batch have perfect privacy. Quantifying these two operations under the $f$-DP framework is crucial for analyzing the privacy guarantee of deep learning models trained by noisy SGD.

\paragraph{Composition.}
Let $f_1 = T(P_1, Q_1)$ and $f_2 = T(P_2, Q_2)$, \cite{dong2019gdp} defines a binary
operator $\otimes$ on trade-off functions such that $f_1 \otimes f_2 = T(P_1 \times P_2,
Q_1 \times Q_2)$, where $P_1 \times P_2$ is the distribution product.  This operator
is commutative and associative. The composition primitive refers to an algorithm $M$ that consists of $n$ algorithms $M_1, \ldots, M_n$, where $M_i$ observes
both the input dataset and output from all previous algorithms\footnote{In this paper, $n$ denotes the number of private algorithms under composition, as opposed to the number of individuals in the dataset. This is to be consistent with the literature on central limit theorems.}. In \cite{dong2019gdp}, it is shown that if $M_i$ is $f_i$-DP for $1 \le i \le n$, then the composed algorithm $M$ is $f_1
\otimes \cdots \otimes f_n$-DP.  The authors further identify a central limit theorem-type phenomenon of the overall privacy loss under composition. Loosely speaking, the privacy guarantee asymptotically converges to GDP in the sense that $f_1 \otimes \cdots \otimes f_n \rightarrow G_\mu$ as
$n \rightarrow \infty$ under certain conditions. The privacy parameter $\mu$ depends on the trade-off functions $f_1, \ldots, f_n$.

\paragraph{Subsampling.} Consider the operator $\sample$ that includes each individual in the
dataset with probability $p$ independently. Let $M \circ \sample$ denote the
algorithm where $M$ is applied to the subsampled dataset. In the subsampling theorem for $f$-DP, \cite{dong2019gdp} proves that
if $M$ is $f$-DP, then $M\circ \sample(S)$ is $\tilde f$-DP if $\tilde f \le f_p$ and $\tilde f \le f_p^{-1}$, where $f_p = p f +
(1-p)\text{Id}$. As such, we can take $\tilde f = \min\{ f_p, f_p^{-1} \}$, which is not
convex in general however. This issue can be resolved by using $\min\{ f_p, f_p^{-1} \}^{**}$ in place of $\min\{ f_p, f_p^{-1} \}$, where $f^{**}$ denotes the double conjugate of $f$. Indeed, \cite{dong2019gdp} shows that the subsampled algorithm is $\min\set{f_p, f_p^{-1}}^{**}$-DP.


\paragraph{Noisy SGD.} Let $M_\sigma$ denote the noisy gradient descent update, where
$\sigma$ is the scale of the Gaussian noise added to the gradient. The noisy SGD update can essentially be represented as $M_\sigma \circ
\text{sample}_p$.
Exploiting the above results for composition and
subsampling, \cite{bu2019deep} shows that $M_\sigma \circ
\text{Sample}_p$ is $\min\{h, h^{-1}\}$-DP,
where $h = p G_{1/\sigma} + (1-p)\text{Id}$. Recognizing that noisy SGD with $n$ iterations is the $n$-fold composition of $M_\sigma \circ
\text{Sample}_p(S)$, the overall privacy lower bound is $\min\{g, g^{-1}\}$-DP, where $g = h^{\otimes n} = (p G_{1/\sigma} +
(1-p)\text{Id})^{\otimes n}$. To evaluate the composition bound, \cite{bu2019deep} uses a central limit theorem-type result in the asymptotic regime where $p\sqrt{n}$ converges
to a positive constant as $n \rightarrow \infty$: in this regime, one can show $g \rightarrow
G_{p\sqrt{n(e^{1/\sigma^2} - 1)}}$ and consequently $\min\set{g, g^{-1}}^{**} \rightarrow G_{p\sqrt{n(e^{1/\sigma^2} - 1)}}$ as well.


\section{Edgeworth Approximation}
\label{sec:edgeworth}
In this section, we introduce the Edgeworth expansion-based approach to computing the privacy bound under composition. The development of this approach builds on \cite{dong2019gdp}, with two crucial modifications.



Consider the hypothesis testing problem
$H_0: \bx \sim P_1 \times \cdots \times P_n$
vs
$H_1: \bx \sim Q_1 \times \cdots \times Q_n$.
Let $\bP$ denote the distribution $P_1 \times \cdots \times P_n$, and
$p_i(\cdot)$ denote the probability density functions of $P_i$. Correspondingly,
we define $\bQ$ and $q_i$ in the same way. Letting $\displaystyle L_i(\cdot) = \log \frac{q_i(\cdot) }{p_i(\cdot)}$, the likelihood ratio test statistic is given by
$\displaystyle
    T_n(\bx) = \log \dfrac{\prod_{i=1}^n q_i(x_i)}{\prod_{i=1}^n p_i(x_i)} = \sum_{i=1}^n L_i(x_i).
$
The Neyman--Pearson lemma states that the most powerful test at a given significance level
$\alpha$ must be a thresholding function of $T_n(\bx)$.
As a result, the optimal rejection rule would reject
$H_0$ if $ T_n(\bx) > \eta $, where $\eta$ is determined by $\alpha$. An equivalent rule is to apply
thresholding to the standardized statistic: $H_0$ is rejected if
\begin{equation}
    \label{eq:rule}
    \frac{ T_n(\bx) - \Ep[T_n(\bx)] }{ \sqrt{ \var_{\bP}[ T_n(\bx) ] } } > h(\alpha),
\end{equation}
where the threshold $h$ is determined by $\alpha$.


In the sequel, for notational simplicity we shall use $T_n$ to denote $T_n(\bx)$, though it is a function of $\bx$.  Let $F_n(\cdot)$ be the CDF of $\dfrac{T_n - \Ep[T_n] }{ \sqrt{ \var_{\bP}[T_n] } }$ when $\bx$ is drawn
from $\bP$. That is,
$
    F_n(h) = \P_p \left( \frac{ T_n - \Ep[T_n] }{ \sqrt{ \var_{\bP}[ T_n ] } } \leq h \right).
$
By the Lyapunov CLT, the standardized statistic
$ \frac{ T_n - \Ep[T_n] }{ \sqrt{ \var_{\bP}[ T_n ] } } $
converges in distribution
to the standard normal random variable. In other words, it holds that
\[
    \frac{ T_n - \Ep[T_n] }{ \sqrt{ \var_{\bP}[ T_n ] } } \xrightarrow{d}
    \N(0,1), \hskip15pt F_n(\cdot) \rightarrow \Phi(\cdot)
\]
as $n \goto \infty$. Likewise, we write
$
    \Ftilde_n(h) = \P_Q \left(\frac{ T_n - \Eq[T_n] }{ \sqrt{ \var_{\bQ}[ T_n ]
} } \leq h \right)
$
with $\bx \sim \bQ$ and get
\[
    \frac{ T_n - \Eq[T_n] }{ \sqrt{ \var_{\bQ}[ T_n ] } } \xrightarrow{d}
\N(0,1), \hskip15pt \Ftilde_n(\cdot) \rightarrow \Phi(\cdot).
\]
With these notations in place, one can write the type I error of the rejection rule (\ref{eq:rule}) as
\begin{equation}
\label{eq:type_1}
    \alpha  =  \P_P \bigg( \frac{ T_n - \Ep[T_n] }{ \sqrt{ \var_{\bP}[ T_n ] } } > h  \bigg)
            = 1 - F_n(h).
\end{equation}
The type II error of this test, which is $f(\alpha)$ by definition, is given by
\[
\begin{aligned}
    f(\alpha) & = && \P_Q \bigg( \frac{ T_n - \Ep[T_n] }{ \sqrt{ \var_{\bP}[ T_n ] } } \leq  h  \bigg) \\
              & = && \P_Q \bigg( \frac{ T_n - \Eq[T_n] }{ \sqrt{ \var_{\bP}[ T_n ] } } \leq
h - \frac{\Eq[T_n] - \Ep[T_n]}{\sqrt{\var_{\bP}[T_n]}} \bigg).
\end{aligned}
\]
In \cite{dong2019gdp}, the authors assume that $f$ is symmetric and therefore derive the identity $\var_{\bQ}(T_n) = \var_{\bP}(T_n)$. As a consequence, $f(\alpha)$ can be written as $\Ftilde_n(h - \mu_n)$, where $\mu_n = (\Eq[T_n] - \Ep[T_n])/\sqrt{\var_{\bQ}[T_n]}$. Taken together, the equations above give rise to $f(\alpha) = \Ftilde_n(F^{-1}_n(1 - \alpha) - \mu_n)$. Leveraging this expression of $f$, \cite{dong2019gdp} proves a CLT-type asymptotic convergence result under certain conditions:
\begin{equation}
    \label{eq:clt}
    f(\alpha) \rightarrow G_\mu(\alpha) = \Phi(\Phi^{-1}(1 - \alpha) - \mu)
\end{equation}
as $n \goto \infty$, where $\mu$ is the limit of $\mu_n$.

Now, we discard the symmetry assumption and just rewrite
\begin{equation}
\begin{aligned}
     f(\alpha)   =  &\hskip3pt \P_Q \bigg( \frac{ T_n - \Ep[T_n] }{ \sqrt{ \var_{\bP}[ T_n ] } } \leq  h  \bigg) \\
               = &\hskip3pt \P_Q \bigg( \frac{ T_n - \Eq[T_n] }{ \sqrt{ \var_{\bP}[ T_n ] } } \leq
h - \frac{\Eq[T_n] - \Ep[T_n]}{\sqrt{\var_{\bP}[T_n]}} \bigg) \\
               =  &\hskip3pt \P_Q \bigg( \frac{ T_n - \Eq[T_n] }{ \sqrt{ \var_{\bQ}[ T_n ] } } \leq
                  \bigg[ h - \frac{\Eq[T_n] - \Ep[T_n]}{\sqrt{\var_{\bP}[T_n]}}\bigg]
              \sqrt{\frac{\var_{\bP}[T_n]}{\var_{\bQ}[T_n]}} \bigg) \\
               = &\hskip3pt \Ftilde_n\bigg( (h - \mu_n)
              \sqrt{\frac{\var_{\bP}[T_n]}{\var_{\bQ}[T_n]}} \bigg).
\end{aligned}
\label{eq:total}
\end{equation}
Plugging Equation~\ref{eq:type_1} into \ref{eq:total}, we obtain
\begin{equation}
    \label{eq:ftrue}
    f(\alpha) = \Ftilde_n\bigg( \left(F^{-1}_n(1 - \alpha) - \mu_n \right) \sqrt{\frac{\var_{\bP}[T_n]}{\var_{\bQ}[T_n]}} \bigg).
\end{equation}
In the special case $f$ is symmetric, the factor
$\sqrt{\frac{\var_{\bP}[T_n]}{\var_{\bQ}[T_n]}} $ is equal to one and we recover the result in \cite{dong2019gdp}.

To obtain the composition bound, the exact computing of Equation~\ref{eq:ftrue} is not
trivial. In Section~\ref{sec:numerical} we present a numerical method to compute it
directly; however, this method is computationally daunting and could not scale to a large
number of compositions. The CLT estimator (Equation~\ref{eq:clt}) can be computed
quickly, however it can be loose for a small or moderate number of compositions.  More
importantly, in practice, we observe that the CLT estimator does not handle the
composition of asymmetric trade-off functions well. To address these issues, we propose
a two-sided approximation method,  where the Edgeworth expansion is applied to both $F_n$
and $\Ftilde_n$ in Equation~\ref{eq:ftrue}.  Our method leads to more accurate
description of $f(\alpha)$, as justified in Section~\ref{sec:expr}.

\subsection{Technical Details}
In Equation~\ref{eq:ftrue}, we need to evaluate $\Ftilde_n$ and $F_n^{-1}$.
Our methods for addressing each of them are described below.
\paragraph{Approximate $\Ftilde_n$.}
\label{sec:approx_Ftilde}
Assume $\bx \sim \bQ$. Denote
\begin{equation}
    X_Q = \frac{T_n - \Eq[T_n]}{\sqrt{\var_\bQ(T_n)}} = \frac{\sum_{i=1}^n (L_i - \mu_i)}{\sqrt{\sum_{i=1}^n \sigma^2_i}},
\end{equation}
where $\mu_i$ and $\sigma_i^2$ are the mean and variance of $L_i$ under distribution
$Q_i$.  Recall $\Ftilde_n$ is the CDF of $X_Q$, and
we can apply the Edgeworth expansion to approximate $\Ftilde_n$ directly, following the techniques introduced in
\cite{hall2013bootstrap}. It provides a family of series that approximate
$\Ftilde_n$ in terms of the cumulants
of $X_Q$, which can be derived from the cumulants of $L_i$s under distribution $Q_i$s. See Definition~\ref{def:cumulant} for the notion of cumulant
and Appendix~\ref{app:cumulant} for how to compute them.

\begin{definition}
\label{def:cumulant}
For a random variable $X$, let $K(t)$ be the natural logarithm of the
moment-generating function of $X$: $K(t) = \log \E\left(e^{tX}\right)$.
The cumulants of $X$, denoted by $\kappa_0, \ldots, \kappa_r, \ldots$ for integer $r>0$,
are the coefficients in the Taylor expansion of $K(t)$ about the
origin:
$
    K(t) = \log \E\left(e^{tX}\right) = \sum_{r=0}^\infty \kappa_r t^r / r!.
$
\end{definition}
\vskip-5pt
The key idea of the Edgeworth approximation is to write the characteristic function of the distribution of
$X_Q$ as the following form:
\[
\chi_n(t)=\bigg(1+\sum _{j=1}^{\infty }{\frac {P_{j}(it)}{n^{j/2}}}\bigg)\exp(-t^{2}/2),
\]
where $P_j$ is a polynomial with degree $3j$, and then truncate the series up to a fixed
number of terms. The corresponding CDF approximation is obtained by the inverse Fourier transform of
the truncated series. The Edgeworth approximation of degree $j$ amounts to truncating the above series up to terms of order $n^{-\frac{j}{2}}$.

Let $\ktilde_r(L_i)$ be the $r$-th cumulant of $L_i$ under $Q_i$, and $\Ktilde_r = \sum_{i=1}^n \ktilde_r(L_i)$.
Let $\sigma_n = \sqrt{\sum_{i=1}^n \sigma^2_i}$.
Denoted by $\Ftilde_\ew(h)$
\footnote{We note that $\Ftilde_\ew(h)$ is not
guaranteed to be a valid CDF in general, however it is a numerical approximation to $F_n(h)$
with improved error bounds compared to the CLT approximation.},
the degree-$2$ Edgeworth approximation of $\Ftilde_n(h)$ is given by
\begin{equation}
    \label{eq:approx_Ftilde}
    \begin{aligned}
       &  \Ftilde_\ew(h) = \Phi(h)
        - \overbrace{\sigma_n^{-3} \cdot \frac{1}{6}  \Ktilde_3 (h^2 - 1) \phi(h) }^{T_1}\\
        & - \overbrace{  \left(
        \frac{  \sigma_n^{-4}    }{24} \Ktilde_4 (h^3 - 3h)
         + \frac{ \sigma_n^{-6}  }{72} \Ktilde_3^2 (h^5 - 10h^3 + 15h) \right)
         \phi(h)}^{T_2}.
    \end{aligned}
\end{equation}
The term $T_1$ is of order $n^{-\frac12}$ and $T_2$ is of order
$n^{-1}$. See Appendix~\ref{app:edgeworth} for detailed derivations.
Our framework covers the CLT approximation introduced in \cite{dong2019gdp}.
The CLT approximation is equivalent to the degree-$0$ Edgeworth approximation,
whose approximation error of order $n^{-\frac12}$.

\paragraph{Approximate $F^{-1}_n$.}
\begin{figure}[t]
    \centering
    \includegraphics[width=0.75\textwidth]{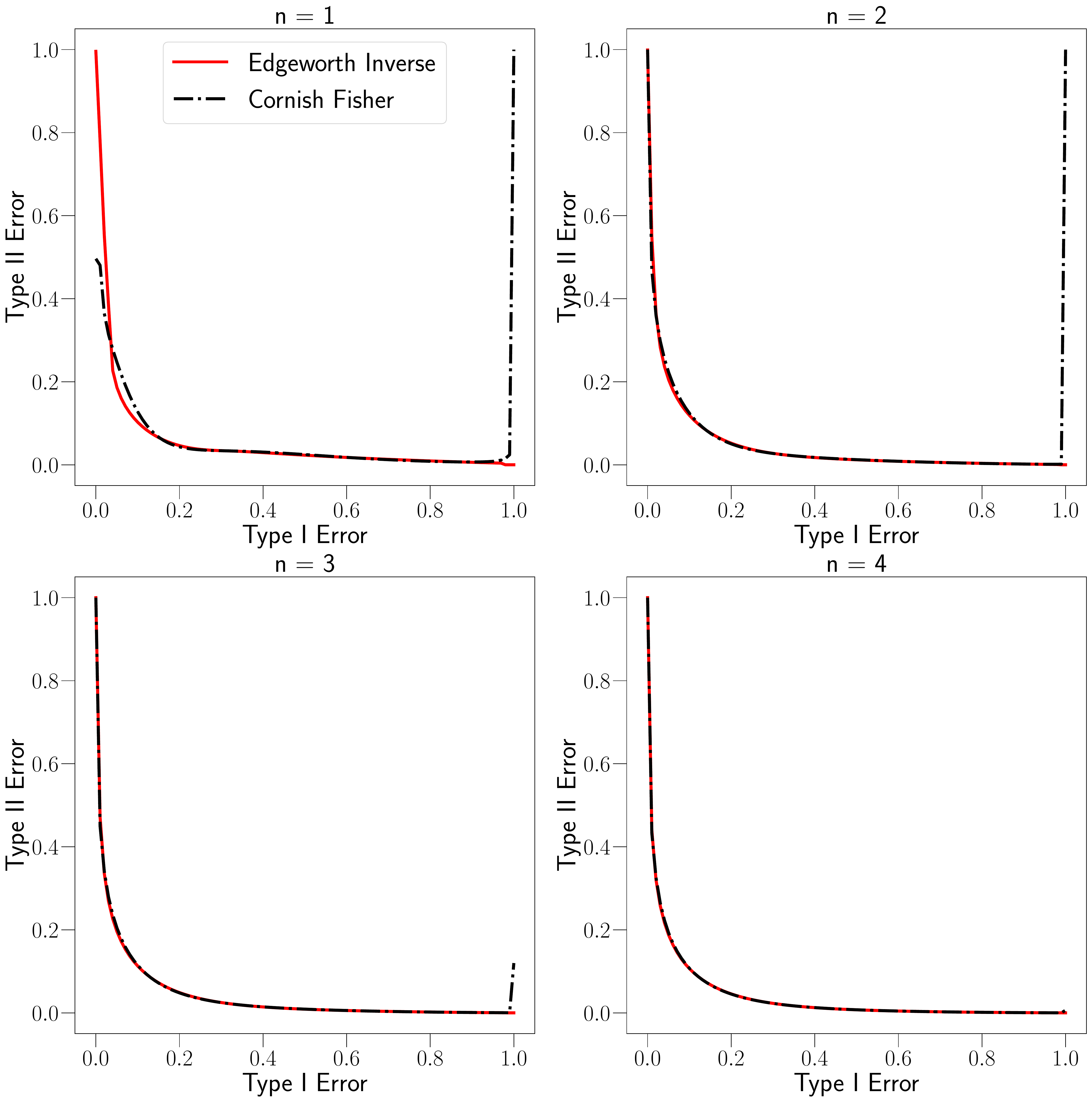}
    \vskip-5pt
    \caption{Comparison of $f(\alpha)$ obtained when approximating $F^{-1}_n(\cdot)$
        using Method~(i) and~(ii). The approximation of $\Ftilde_n(\cdot)$ is fixed
        as in Equation~\ref{eq:approx_Ftilde}.
    }
    \label{fig:cornish_fisher}
\end{figure}
In Equation~\ref{eq:ftrue} we need to compute the $F^{-1}_n(1-\alpha)$.
This is the $1-\alpha$ quantile of the distribution of
$\frac{T_n(\bx) - \Ep[T_n(\bx)]}{\sqrt{\var_\bP[T_n(\bx)]}}$, where $\bx$ is $\bP$ distributed.
We consider two approaches to deal with it.

\emph{Method~(i):} 
First compute the degree-$2$
Edgeworth approximation $F_\ew(\cdot)$ of $F_n(\cdot)$:
\begin{equation}
    \label{eq:approx_Fn}
    \begin{aligned}
        F_\ew(h)  = & \Phi(h)  - \sigma_n^{-3} \cdot \frac{1}{6}  \Kap_3 (h^2 - 1) \phi(h) \\
         & - \sigma_n^{-4} \cdot \frac{1}{24} \Kap_4 (h^3 - 3h) \phi(h) \\
         & - \sigma_n^{-6} \cdot \frac{1}{72} \Kap_3^2 (h^5 - 10h^3 + 15h) \phi(h),
    \end{aligned}
\end{equation}
where $\Kap_r = \sum_{i=1}^n \kappa_r(L_i)$ and $\kappa_r(L_i)$ is the $r$-th cumulant of $L_i$ under $P_i$.
Next, numerically solve equation $F_\ew(h) -  1 - \alpha = 0$.

\emph{Method~(ii):} Apply the closely related \emph{Cornish-Fisher Expansion} \citep{cornish1938moments,
        fisher1960percentile},
        an asymptotic expansion used to approximate the quantiles of a probability
        distribution based on its cumulants,
        to approximate $F^{-1}_n(\cdot)$ directly.
Let $z = \Phi^{-1}(1-\alpha)$ be the $1-\alpha$ quantile of the standard normal
distribution. The degree-2 Cornish-Fisher approximation of the $1 - \alpha$ quantile of
$ \frac{T_n - \Ep[T_n]}{\sqrt{\var_\bP(T_n)}}$ is given by
\begin{equation}
\label{eq:cornish_fisher}
  \begin{aligned}
    &  F^{-1}_n(1-\alpha) \approx z + \sigma_n^{-3} \frac{1}{6} \Kap_3 (z^2 - 1) \\
    & \hskip10pt + \sigma_n^{-4} \frac{1}{24} \Kap_4 (z^3 - 3 z )  - \sigma_n^{-6} \frac{1}{36} \Kap_3^2 (2 z^3 - 5z ).
  \end{aligned}
\end{equation}
Both approaches have pros and cons. The Cornish-Fisher approximation has closed form
solution, yet Figure~\ref{fig:cornish_fisher} shows that it is unstable
at the boundary when the number of compositions is small.
For our experiments in Section~\ref{sec:expr}, we use
the numerical inverse approach throughout all the runs.

\subsection{Error Analysis}

Here we provide an error bound for approximating the overall privacy level $f$ using
Edgeworth expansion. For simplicity, we assume that $f$ is symmetric and the
log-likelihood ratios $L_i$'s are iid distributed with the common distribution having
sufficiently light tails for the convergence of the Edgeworth expansion, under both $P$ and $Q$.

The Edgeworth expansion of degree 2 satisfies both $F_n(h) - F_{\ew}(h) = O(n^{-\frac32})$ and $\widetilde F_n(h) - \widetilde F_{\ew}(h) = O(n^{-\frac32})$. Conversely, the inverse satisfies $F_n^{-1}(\alpha) - F_{\ew}^{-1}(\alpha) = O(n^{-\frac32})$ and $\widetilde F_n^{-1}(\alpha) - \widetilde F_{\ew}^{-1}(\alpha) = O(n^{-\frac32})$ for $\alpha$ that is bounded away from 0 and 1. Making use of these approximation bounds, we get
\[
\begin{aligned}
f_{\ew}(\alpha) &= \tilde F_{\ew}(F^{-1}_{\ew}(1 - \alpha) - \mu)\\
                &= \tilde F_{\ew}(F^{-1}_{n}(1 - \alpha) + O(n^{-\frac32})- \mu)\\
                &= \tilde F_{n}(F^{-1}_{n}(1 - \alpha) + O(n^{-\frac32})- \mu) + O(n^{-\frac32})\\
                &= f(\alpha) + O(n^{-\frac32}).
\end{aligned}
\]
As a caveat, we remark that the analysis above does not extend the error bound
$O(n^{-\frac32})$ to a type I error that is close to 0 or 1. The result states that the
approximation error of using the Edgeworth expansion quickly tends to 0 at the rate of
$O(n^{-\frac32})$. This error rate can be improved at the expense of a higher order
expansion of the Edgeworth series. For comparison, our analysis can be carried over to
show that the approximation error of using CLT is $O(n^{-\frac12})$.

\subsection{Computational Cost}
The computational cost of the Edgeworth approximation can be broken down as follows.
We first need to compute the cumulants of $L_i$ under $P_i$ and $Q_i$ up to a certain
order, for $i=1, \ldots, n$.
Next, we need to compute $F^{-1}_\ew(1 - \alpha)$. The
Cornish-Fisher approximation (Equation~\ref{eq:cornish_fisher}) costs constant time.
If we choose to compute the inverse numerically, we need to evaluate
Equation~\ref{eq:approx_Fn} then solve a one-dimensional root-finding problem. The
former has a constant cost, and the latter can be extremely efficiently computed by
various types of iterative methods, for which we can consider the cost as constant too.
Finally, it costs constant time to evaluate $\Ftilde_\ew(\cdot)$ using
Equation~\ref{eq:approx_Ftilde}.
Therefore, the only cost that might scale with the number of compositions $n$ is the
computation of cumulants.  However, in many applications including computing the privacy
bound for noisy SGD, all the $L_i$s are iid distributed. Under such condition, we only
need to compute the cumulants once. The total cost is thus a constant independent of
$n$.  This is verified by the runtime comparison in Section~\ref{sec:expr}.


\section{A Two-Parameter Privacy Interpreter}
\label{sec:param}
Let $M_1$ and $M_2$ be  two private algorithms that are associated with trade-off
functions $f_1$ and $f_2$, respectively. The algorithm $M_2$ will be more private than $M_1$ if $f_2$
upper bounds $f_1$. For the family of Gaussian differentially private algorithms,
this property can be reflected by the parameter $\mu \geq 0$ directly,
where a smaller value of $\mu$ manifests a more private algorithm.

Here we provide a two-parameter description $(\mu^*, \gamma)$ for the Edgeworth
approximation, through which the privacy guarantee between
two different approximations can be directly compared.

Given an approximation $f_\ew$, let $\alpha^*$ be its fixed point such that
$f_\ew(\alpha^*) = \alpha^*$.  Let $\mu^*$ be the parameter of GDP for which $G_{\mu^*}$
admits the same fixed point as $f_\ew$: $G_{\mu^*}(\alpha^*) = \alpha^*$.  Such $\mu^*$
can be computed in closed form:
\[ \mu^* = \Phi^{-1}(1 - \alpha^*) - \Phi^{-1}(\alpha^*). \]
Let $ \gamma = \int_{0}^1 f_\ew(\alpha) d\alpha$ be the area under the curve of $f_\ew$.

Two symmetric Edgeworth approximations\footnote{If $f$ is asymmetric, we can
always symmetrize it by taking $\min\set{f, f^{-1}}^{**}$.} $f_\ew^{(1)}$ and $f_\ew^{(2)}$ can be compared in the sense that
$f_\ew^{(2)}$ is more private than $f_\ew^{(1)}$ if their associated parameters
$({\mu^*}^{(1)}, \gamma^{(1)})$  and  $({\mu^*}^{(2)}, \gamma^{(2)})$ satisfy the
following condition:
\[
 {\mu^*}^{(2)} < {\mu^*}^{(1)} \; \text{and} \;  \gamma^{(2)} > \gamma^{(1)}.
\]

\begin{figure}[t]
    \centering
    \includegraphics[width=0.4\columnwidth]{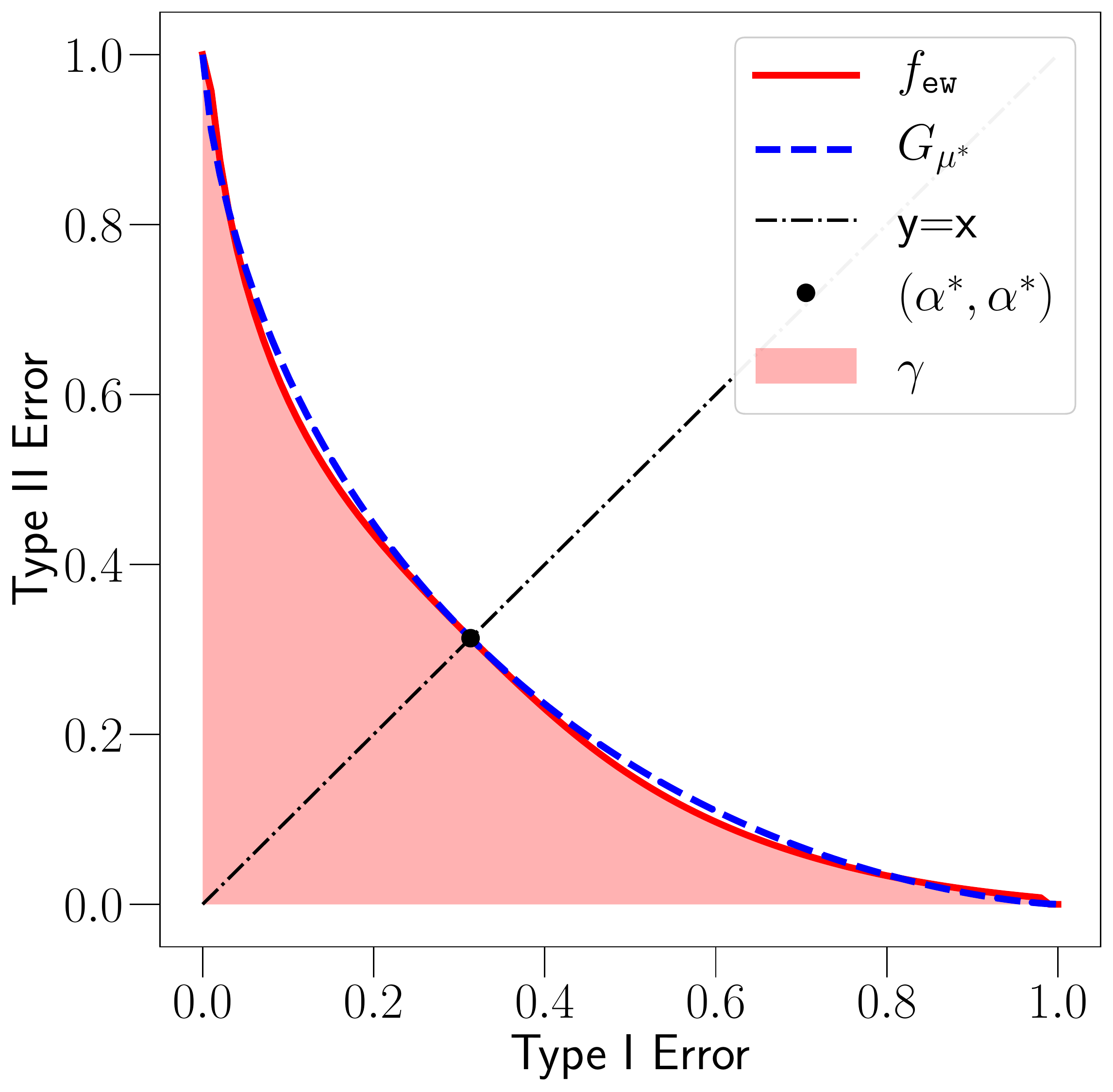}
    \includegraphics[width=0.4\columnwidth]{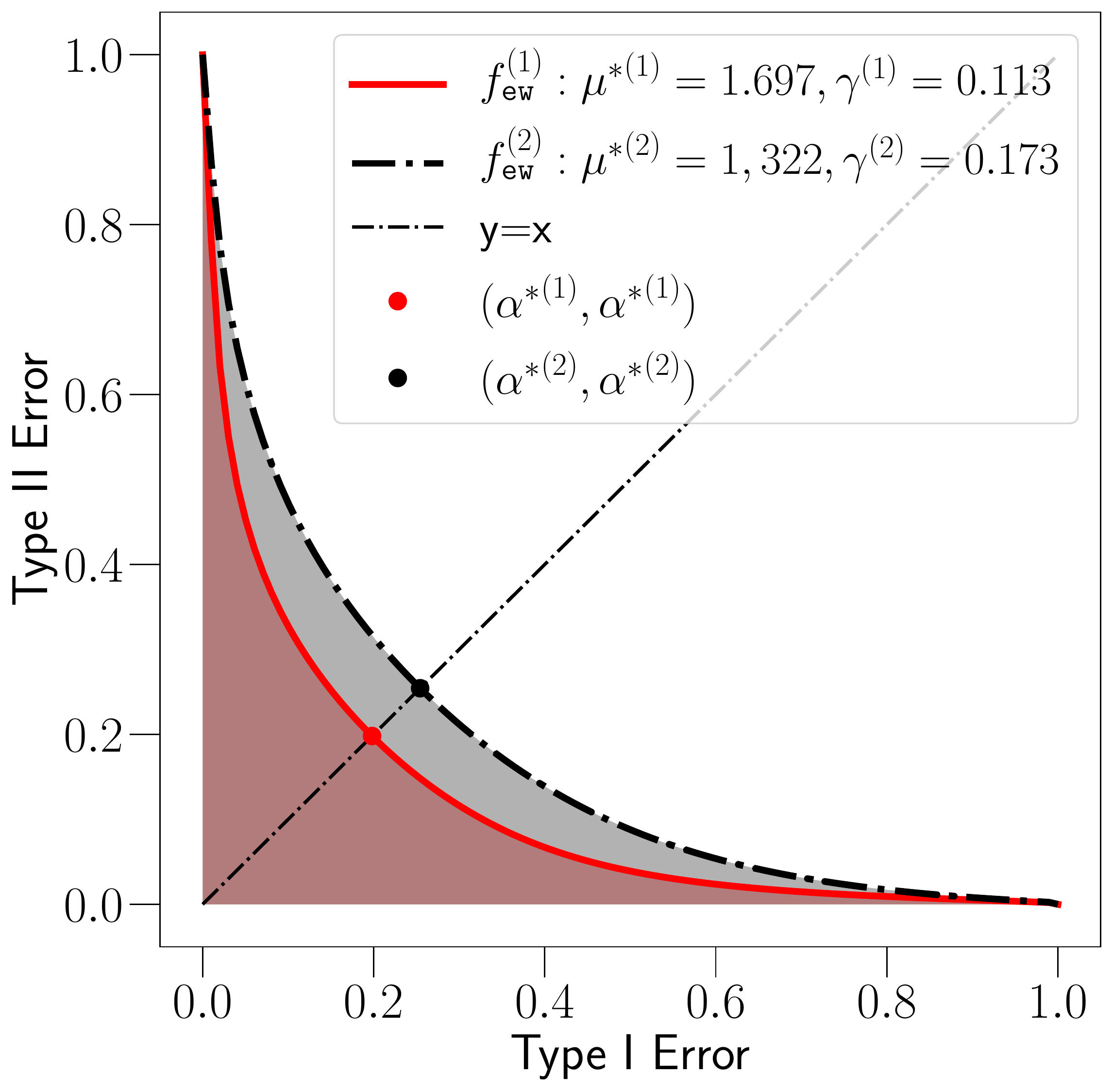}
    \vskip-5pt\caption{Left: An illustration of the $(\mu^*, \gamma)$ parameterization.
    Right: $f^{(2)}_\ew$ is more private than $f^{(1)}_\ew$.
    }
    \label{fig:ew_param}
\end{figure}

The left panel of Figure~\ref{fig:ew_param} provides the geometric interpretation of
the above parameterization. The Edgeworth approximation $f_\ew$, the CLT approximation
$G_{\mu^*}$, and the line $y=x$ intersect at the point $(\alpha^*, \alpha^*)$.

The right panel compares two Edgeworth approximations $f^{(1)}_\ew$ and $f^{(2)}_\ew$.
It is easy to see that in this case $f^{(2)}_\ew$ upper bounds $f^{(1)}_\ew$ and thus it is more private
than $f^{(1)}_\ew$. There are two important properties. First, its intersection with the line $y=x$ is further away from the
original point than $f^{(1)}_\ew$. Consider the geometric interpretation shown in the
left panel. This implies that ${\mu^*}^{(2)} < {\mu^*}^{(1)} $. Second, the
approximation $f^{(2)}_\ew$ also has a larger area under the curve than $f^{(1)}_\ew$,
which is essentially $\gamma^{(2)} > \gamma^{(1)}$.

This parameterization defines a partial order over the set
$\set{(\mu, \gamma): \; \mu \geq 0, \, 0 \leq \gamma \leq \frac12 }$\footnote{The
    perfect privacy is attained when the trade-off function is $\text{Id}(\alpha) = 1 -
\alpha$, whose area under the curve is $\frac12$.}.
It is also applicable to general trade-off functions.

\section{Experiments}
\label{sec:expr}
In this section, we present numerical experiments to compare the Edgeworth approximation
and the CLT approximation. Before we proceed, we introduce a numerical method to
directly compute the true composition bound in Section~\ref{sec:numerical}. This method
is not scalable and hence merely serves as a reference for our comparison. We use the
Edgeworth approximation of degree $2$ for all the experiments.  In the sequel, we refer
to those methods as \EW, \clt, and \num, respectively.  All the methods are implemented
in Python\footnote{The code is available at
    \url{https://github.com/enosair/gdp-edgeworth}.} and all the experiments are carried out on a MacBook with 2.5GHz processor and
16GB memory.

\subsection{A Numerical Method}
\label{sec:numerical}
Consider the problem of computing $f^{\otimes n}$ numerically. We know that we can find
$P,Q$ such that $f=T(P,Q)$ and $f^{\otimes n} = T(P^n,Q^n)$. However, computing
$T(P^n,Q^n)$ directly
involves high-dimensional testing, which can be challenging. We show this difficulty can
be avoided by going from the primal representation to the dual representation. Let $(\eps,
\delta_k(\eps))$ be the dual representation associated with $f^{\otimes k}$. The method
contains three steps to obtain $\delta_n(\eps)$ for $f^{\otimes n}$.

\begin{enumerate}
    \item Convert $f$ to $\delta_1(\eps)$. This step can be done implicitly via $P$ and $Q$, see Lemma~\ref{lem:delta_1}.
    \item Iteratively compute $\delta_{k+1}(\eps)$ from $\delta_k(\eps)$ using Lemma~\ref{lem:delta_n}.
    \item Convert $\delta_n(\eps)$ to $f^{\otimes n}$ using Proposition~\ref{prop:dualprimal}.
\end{enumerate}

Next, we explain how to compute $\delta_{k+1}(\eps)$ from $\delta_k(\eps)$. First, we need a lemma that relates $\delta_1(\eps)$ with $P,Q$.

\begin{restatable}[]{lemma}{lemdelta}
\label{lem:delta_1}
	Suppose $f=T(P,Q)$ and $P,Q$ have densities $p,q$ with respect to a dominating measure $\mu$. Then the dual representation $\delta_1(\eps) = 1+f^*(-\e^\eps)$ satisfies
	$\displaystyle \delta_1(\eps) = \int (q-\e^\eps p)_+ \diff \mu.$
\end{restatable}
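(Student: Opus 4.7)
The plan is to unwind the conjugate $f^*(-\e^\eps)$ into a supremum over rejection rules $\phi$ and then recognize the optimal $\phi$ as the Neyman--Pearson thresholding rule at level $\e^\eps$. The whole argument is essentially a duality computation, and since everything here is an elementary manipulation, I do not expect a serious technical obstacle; the only care needed is in swapping the order of infima.

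\textbf{Step 1 (unfold the conjugate).} By definition,
\[
f^*(-\e^\eps) \;=\; \sup_{0 \le \alpha \le 1}\bigl[-\e^\eps\alpha - f(\alpha)\bigr]
\;=\; -\inf_{0 \le \alpha \le 1}\bigl[\e^\eps\alpha + f(\alpha)\bigr],
\]
so $\delta_1(\eps) = 1 + f^*(-\e^\eps) = 1 - \inf_\alpha[\e^\eps\alpha + f(\alpha)]$.

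\textbf{Step 2 (reduce to a supremum over rejection rules).} Substituting the definition $f(\alpha) = \inf_\phi\{1 - \E_Q(\phi) : \E_P(\phi) \le \alpha\}$ and swapping the two infima,
\[
\inf_{0\le\alpha\le 1}\bigl[\e^\eps\alpha + f(\alpha)\bigr]
= \inf_{\phi}\inf_{\alpha \ge \E_P(\phi)}\bigl[\e^\eps\alpha + 1 - \E_Q(\phi)\bigr].
\]
Because $\e^\eps \ge 0$, the inner infimum is attained at $\alpha = \E_P(\phi)$, so the right side collapses to $1 - \sup_\phi\bigl[\E_Q(\phi) - \e^\eps\E_P(\phi)\bigr]$. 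Combined with Step 1,
\[
\delta_1(\eps) \;=\; \sup_\phi\bigl[\E_Q(\phi) - \e^\eps\E_P(\phi)\bigr]
\;=\; \sup_\phi \int \phi\,(q - \e^\eps p)\,\diff\mu,
\]
where the supremum is over all measurable $\phi:\Omega\to[0,1]$.

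\textbf{Step 3 (identify the optimizer).} The integrand $\phi(q - \e^\eps p)$ is pointwise maximized over $\phi\in[0,1]$ by taking $\phi^{\star} = \mathbf{1}\{q > \e^\eps p\}$ (with any choice on the tie set having zero contribution). Plugging $\phi^{\star}$ back in gives $\sup_\phi \int \phi(q - \e^\eps p)\,\diff\mu = \int (q - \e^\eps p)_+\,\diff\mu$, which is the claimed identity.

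The only subtlety worth double-checking is the interchange of infima in Step~2, but this is valid since it is simply a restatement of a double infimum, and the boundary values $\alpha=0,1$ are admissible as they correspond to $\phi\equiv 0$ and $\phi\equiv 1$ respectively, both of which are among the $\phi$ over which we take the final supremum.
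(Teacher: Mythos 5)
Your proof is correct. Both arguments ultimately reduce $\delta_1(\eps)$ to the variational problem $\sup_\phi \int \phi\,(q-\e^\eps p)\,\diff\mu$ (equivalently $\max_E Q[E]-\e^\eps P[E]$) and then observe that the pointwise optimizer is $\mathbf{1}\{q>\e^\eps p\}$, yielding $\int(q-\e^\eps p)_+\,\diff\mu$. Where you differ from the paper is in how you reach that variational form: the paper argues that $\delta\geq\delta_1(\eps)$ if and only if $f\geq 1-\delta-\e^\eps x$, and then invokes the hypothesis-testing characterization of $(\eps,\delta)$-DP---essentially the Wasserman--Zhou equivalence between the trade-off-function lower bound $f_{\eps,\delta}$ and the inequality $Q[E]\leq\e^\eps P[E]+\delta$ for all events---to rewrite $\delta_1(\eps)$ as $\max_E Q[E]-\e^\eps P[E]$. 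You instead unwind the conjugate directly and swap the two infima, so you effectively re-derive that characterization from scratch rather than citing it. Your route is more self-contained and elementary (it requires nothing beyond the definition of the trade-off function and a swap of infima); the paper's is shorter because it leans on an established duality fact the reader is assumed to know. One small point worth being explicit about: you optimize over randomized rules $\phi:\Omega\to[0,1]$ while the paper optimizes over events $E$; since the objective is linear in $\phi$, these suprema coincide, so this is cosmetic. The interchange of infima you flag is unproblematic because the joint feasible set $\{(\alpha,\phi): 0\leq\alpha\leq1,\ \E_P(\phi)\leq\alpha\}$ is a single constraint region and a double infimum over it can be taken in either order.
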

Suppose we are given $f_1 = T(P_1,Q_1), f_2 = T(P_2,Q_2)$. Let $\delta_1,\delta_2$ and $\delta_\otimes$ be the dual representations of $f_1, f_2$ and $f_1\otimes f_2$ respectively. The following lemma shows how to evaluate $\delta_{\otimes}$ from $\delta_1$ and $\delta_2$. To simplify notations, we assume $P_i,Q_i$ are distributions on the real line and have corresponding densities $p_i,q_i$ for $i=1,2$ with respect to Lebesgue measure. Generalization to abstract measurable space is straightforward.
\begin{restatable}[]{lemma}{deltan}
\label{lem:delta_n}
	 Let $L_2(y) = \log \frac{q_2(y)}{p_2(y)}$. Then
	 $\displaystyle \delta_{\otimes}(\eps) = \int_\R \delta_1\big(\eps-L_2(y)\big)q_2(y)\diff y.$
\end{restatable}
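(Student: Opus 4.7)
The plan is to reduce everything to Lemma~\ref{lem:delta_1} applied twice, by viewing the product measures as an iterated integral and factoring out the ``$y$-part'' of the densities inside the positive part.

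First, I would apply Lemma~\ref{lem:delta_1} to the pair $(P_1\times P_2,\,Q_1\times Q_2)$, whose product densities with respect to Lebesgue measure on $\R^2$ are $p_1(x)p_2(y)$ and $q_1(x)q_2(y)$. This gives
\[
\delta_\otimes(\eps)=\iint_{\R^2}\bigl(q_1(x)q_2(y)-\e^\eps p_1(x)p_2(y)\bigr)_+\diff x\,\diff y.
\]
Next, I would use Tonelli's theorem (the integrand is nonnegative and measurable) to iterate, fixing $y$ and integrating over $x$ first.

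The key algebraic step is to pull $q_2(y)$ out of the positive part. On the set $\{q_2(y)>0\}$, I would rewrite
\[
q_1(x)q_2(y)-\e^\eps p_1(x)p_2(y)=q_2(y)\bigl(q_1(x)-\e^{\eps-L_2(y)}p_1(x)\bigr),
\]
using $p_2(y)/q_2(y)=\e^{-L_2(y)}$. Since $q_2(y)\ge 0$, the positive part commutes with this factor, yielding
\[
\bigl(q_1(x)q_2(y)-\e^\eps p_1(x)p_2(y)\bigr)_+=q_2(y)\bigl(q_1(x)-\e^{\eps-L_2(y)}p_1(x)\bigr)_+.
\]
On the set $\{q_2(y)=0\}$ the left-hand side equals $(-\e^\eps p_1(x)p_2(y))_+=0$, which agrees with the right-hand side, so the identity holds pointwise. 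Integrating over $x$ and recognizing the inner integral as $\int(q_1-\e^{\eps-L_2(y)}p_1)_+\diff x=\delta_1(\eps-L_2(y))$ via Lemma~\ref{lem:delta_1} completes the argument.

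The main obstacle is essentially bookkeeping around the null set $\{q_2=0\}$ and making sure $L_2(y)$ is well-defined where it matters; once one observes that the $y$-marginal factor $q_2(y)$ can be pulled outside $(\cdot)_+$ thanks to its nonnegativity, the rest is a clean application of Lemma~\ref{lem:delta_1} in the $x$-variable. I would also note the generalization to an abstract dominating measure is identical, replacing $\diff y$ with $\diff\mu_2(y)$ and Lebesgue densities with Radon--Nikodym derivatives.
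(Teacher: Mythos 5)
Your proof is correct and follows essentially the same route as the paper's: apply Lemma~\ref{lem:delta_1} to the product pair, pull the nonnegative factor $q_2(y)$ through the positive part, use Fubini/Tonelli, and apply Lemma~\ref{lem:delta_1} again in the $x$-variable. Your explicit handling of the null set $\{q_2(y)=0\}$ (where $L_2(y)$ is ill-defined but both sides vanish) is a small, welcome refinement that the paper's proof glosses over.
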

In particular, it yields a recursive formula to compute $f^{\otimes n}$ when $f=T(P,Q)$. Again we assume $P,Q$ has densities $p,q$ on the real line.
Let $L(x) = \log\tfrac{q(x)}{p(x)}$. We have
\begin{align*}
	\delta_0(\eps) &= \max\{1-\e^\eps,0\}\\
	\delta_{k+1}(\eps) &= \int\delta_k\big(\eps-L(x)\big)\diff x\\
	f^{\otimes n}(\alpha) &= \sup_{\eps\in
	\R} 1-\delta_n(\eps)-\e^\eps \alpha.
\end{align*}
We remark here that if $f$ is asymmetric, then the dual involves negative $\varepsilon$,
which is why the conversion to $f^{\otimes n}$ involves the whole real line.
The proof of the above lemmas is deferred to
Appendix~\ref{sec:details_of_the_numerical_method}.

In practice, it is more efficient to store $\delta_n$ in the memory than to perform the
computation on the fly, so we need to discretize the domain and store the function value on this grid. Consider an abstract grid $\{\eps_j\}_{j=1}^N\subseteq \R$, the recursion goes as follows:
\begin{align*}
	\delta_0(\eps_j) &= \max\{1-\e^{\eps_j},0\}, j=1,\ldots,N\\
	\delta_{k+1}(\eps_j) &= \int\delta_k\big(\lfloor\eps_j-L(x)\rfloor\big)\diff x, j=1,\ldots,N\\
	f^{\otimes n}(\alpha) &= \sup_{1\leqslant j\leqslant N} 1-\delta_n(\eps_j)-\e^{\eps_j} \alpha.
\end{align*}
where $\lfloor\eps_j-L(x)\rfloor = \max\{\eps_l:\eps_l\leqslant\eps_j-L(x)\}$. This rounding step can be replaced by an interpolation as well.

The major challenge in making this numerical method practical for computing composition
product of trade-off functions is that it is slow in computation as it involves $nN$
numerical integrations.

\subsection{A Moderate Number of Compositions}
Section~\ref{sec:edgeworth} shows that the approximation error of \EW is
$O(n^{-\frac32})$, and for \clt the error is $O(n^{-\frac12})$.  We thus expect for
small or moderate values of $n$, \EW will produce non-negligible outperformance to \clt. To verify this, we
investigate their performance on a toy problem for testing order-$n$ compositions of
Laplace distributions\footnote{ The density function of $\texttt{Lap}(\theta,b)$ is
$\frac{1}{2b} \exp(-\abs{x-\theta}/b)$. }: $\bP = \texttt{Lap}(0,1)^{\otimes n}$ vs $\bQ =
\texttt{Lap}(\theta,1)^{\otimes n}$.

\begin{figure}[tb]
\centering
    \includegraphics[width=0.7\textwidth]{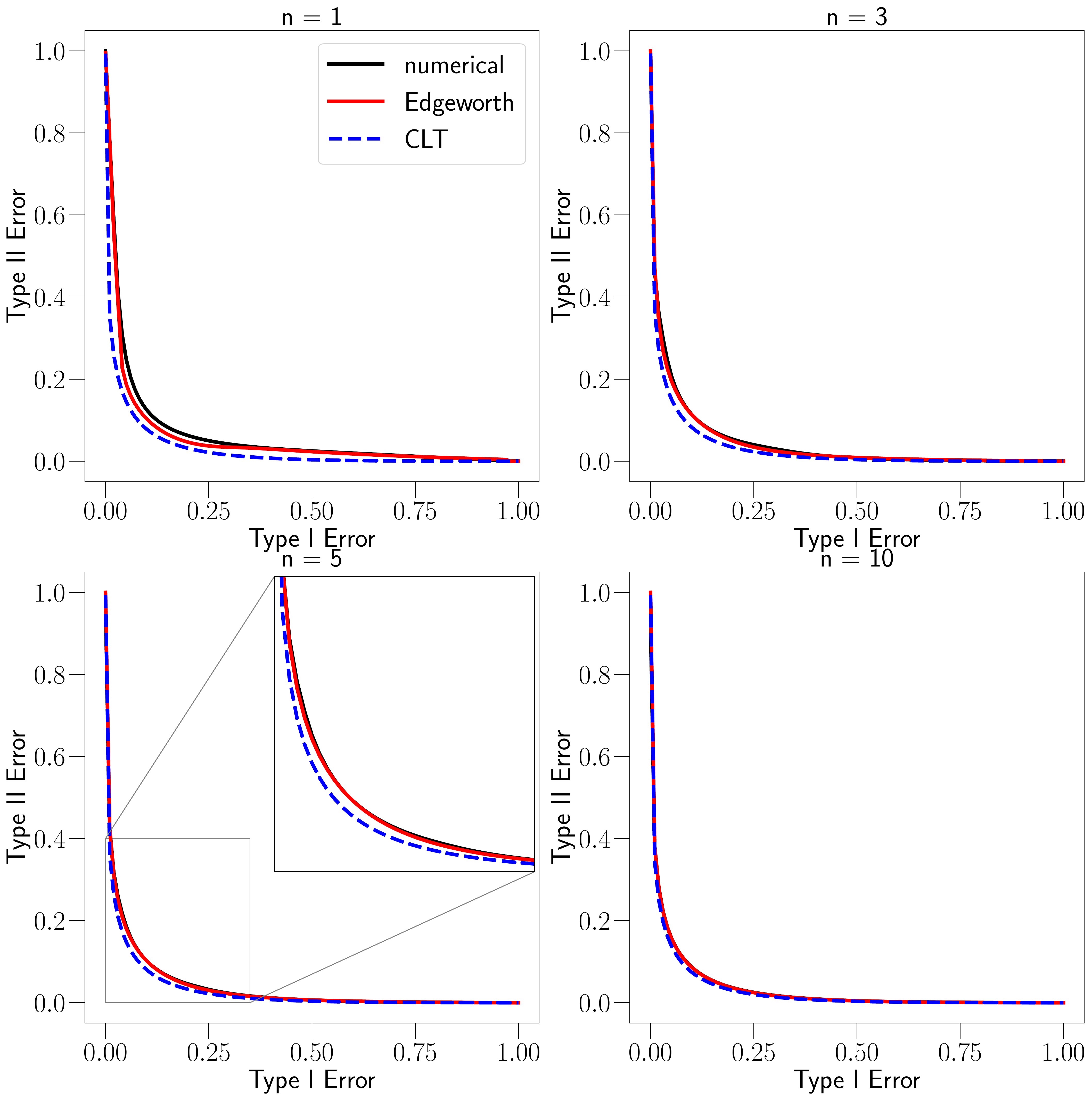}
    \vskip-5pt
    \caption{The estimated trade-off functions for testing
    $\texttt{Lap}(0, 1)^{\otimes n}$ vs $\texttt{Lap}(3/\sqrt{n}, 1)^{\otimes n}$.}
    \label{fig:laplace}
\end{figure}

\begin{figure}[tb]
\centering
    \includegraphics[width=0.7\textwidth]{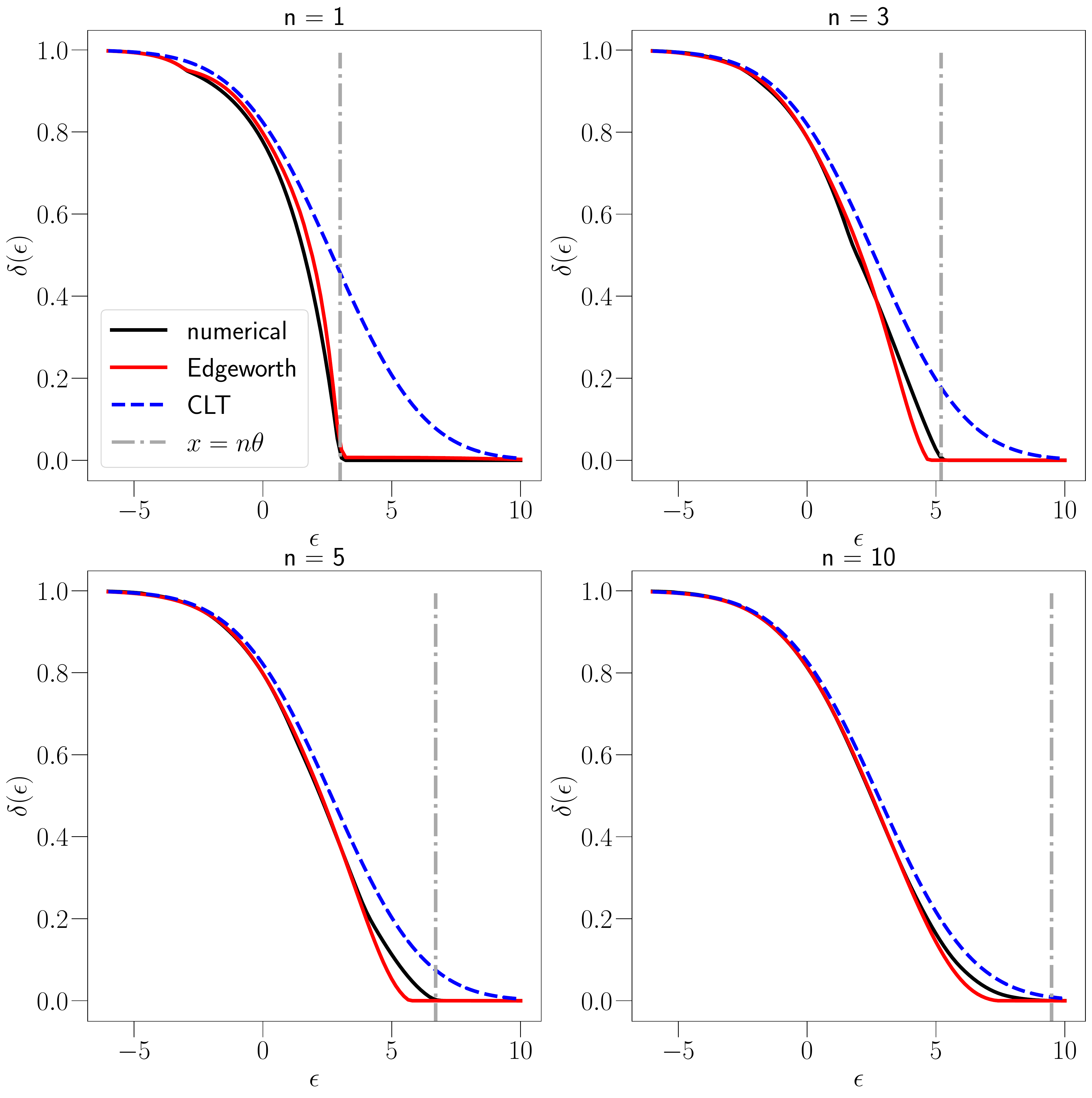}
    \vskip-5pt
    \caption{The associated $(\eps, \delta(\eps))$-DP of the estimated trade-off functions for testing 
    $\texttt{Lap}(0, 1)^{\otimes n}$ vs $\texttt{Lap}(3/\sqrt{n}, 1)^{\otimes n}$.}
    \label{fig:laplace_dual}
\end{figure}

We let the number of compositions $n$ vary from $1$ to $10$.  Since the privacy
guarantee decays as $n$ increases and the resulting curves would be very close to the
axes, we set $\theta = 3/\sqrt{n}$ for the sake of better visibility.
Figure~\ref{fig:laplace} plots the estimated trade-off functions for four representative
cases $n=1, 3, 5, 10$.
For each of the methods, we also compute the associated $(\eps, \delta)$-DP (see
Proposition~\ref{prop:primaldual}) and plot $\delta$ as a function of $\eps$ in
Figure~\ref{fig:laplace_dual}. From both the primal and dual views, \EW coincides
better with \num in all the cases.  When the number of compositions is 10, even though
the difference between \EW and \clt is small in the primal visualization
(Figure~\ref{fig:laplace}), the $(\eps, \delta)$ presentation still clearly
distinguishes them.  In addition, due to the heavy tail of the Laplace distribution, we
shall have $ T(\set{\texttt{Lap}(0,1)}^{\otimes n}, \set{\texttt{Lap}(\theta,
1)}^{\otimes n}) \geq f_{\eps, 0}$ for $\eps \geq n\theta$ (see
Definition~\ref{prop:dualprimal} for the exact form of $f_{\eps, \delta}$).  Therefore,
the ground truth has the property that $\delta(\eps) = 0$ for $\eps \geq n\theta$.
Figure~\ref{fig:laplace_dual} shows
that \EW also outperforms \clt for predicting this changing point.

\begin{table}[h!]
\centering
\begin{tabular}{c|c|c|c|c|c}
  \toprule & $n=2$ & $n=4$ & $n=6$ & $n=8$ & $n=10$ \\ \midrule
  CLT & 0.0004 & 0.0004 & 0.0004 & 0.0004 & 0.0005 \\ \hline
  Edgeworth & 0.2347 & 0.2341 & 0.2391 & 0.2222 & 0.2234 \\ \hline
  Numerical & 3.6834 & 7.3361 & 12.055 & 16.729 & 21.3575 \\ \bottomrule
\end{tabular}
\vskip-5pt\caption{Runtime for estimating
the trade-off function for testing
$\texttt{Lap}(0,1)^{\otimes n}$ vs $\texttt{Lap}(3/\sqrt{n},1)^{\otimes n}$. }
\label{tbl:laplace_runtime}
\end{table}
Table~\ref{tbl:laplace_runtime} reports the runtime of the above experiment. \clt takes
constant computing time at the scale of 1e-4 second.  Due to the homogeneity of
the component distributions under composition, the runtime of \EW is also invariant of the
composition number, which is at the scale of 0.1 second. \num is computationally heavy.  Its
runtime is much larger and grows linearly as the number of compositions.

\begin{figure}[tb]
    \centering
    \includegraphics[width=0.4\columnwidth]{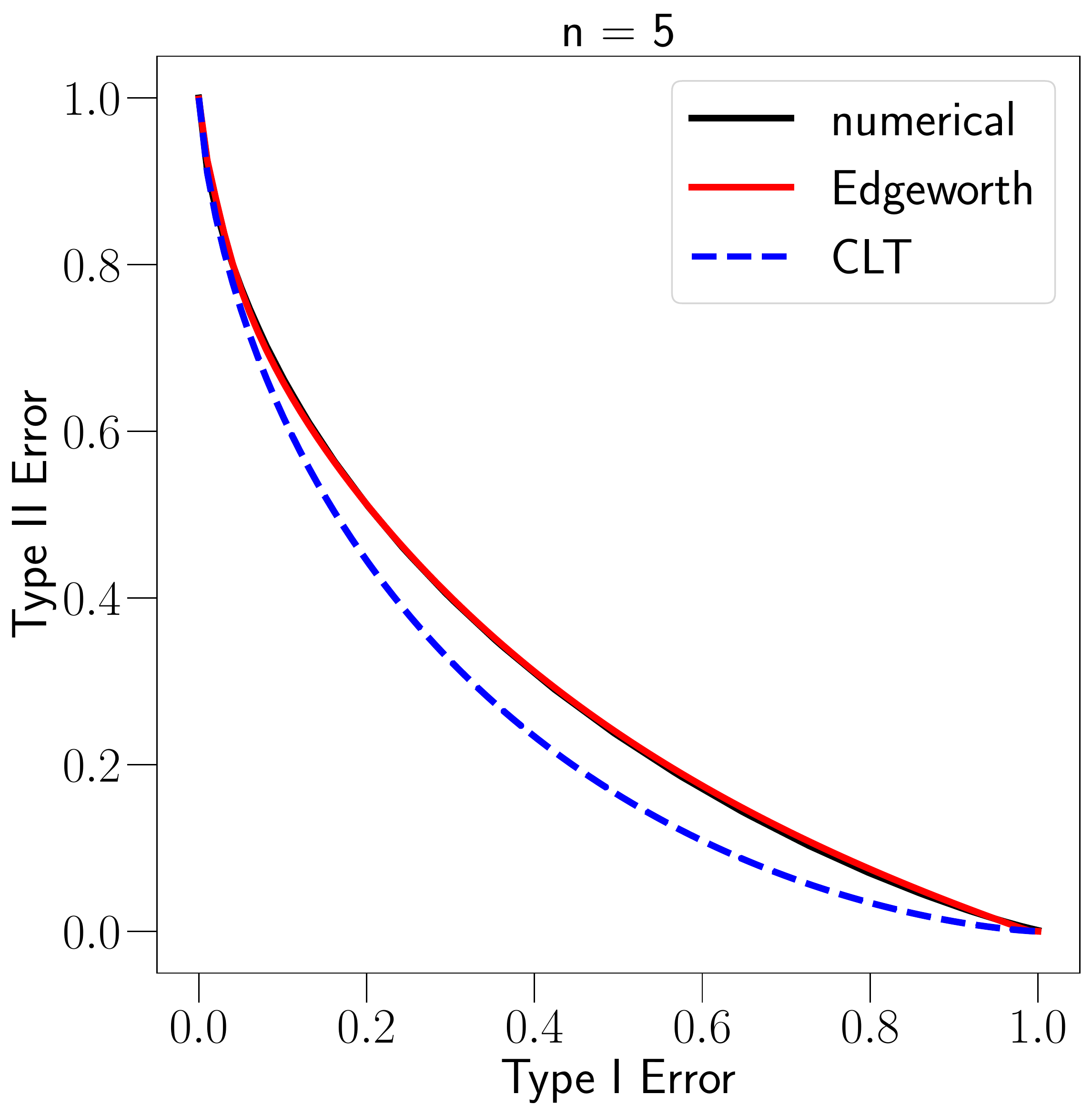}
    \includegraphics[width=0.4\columnwidth]{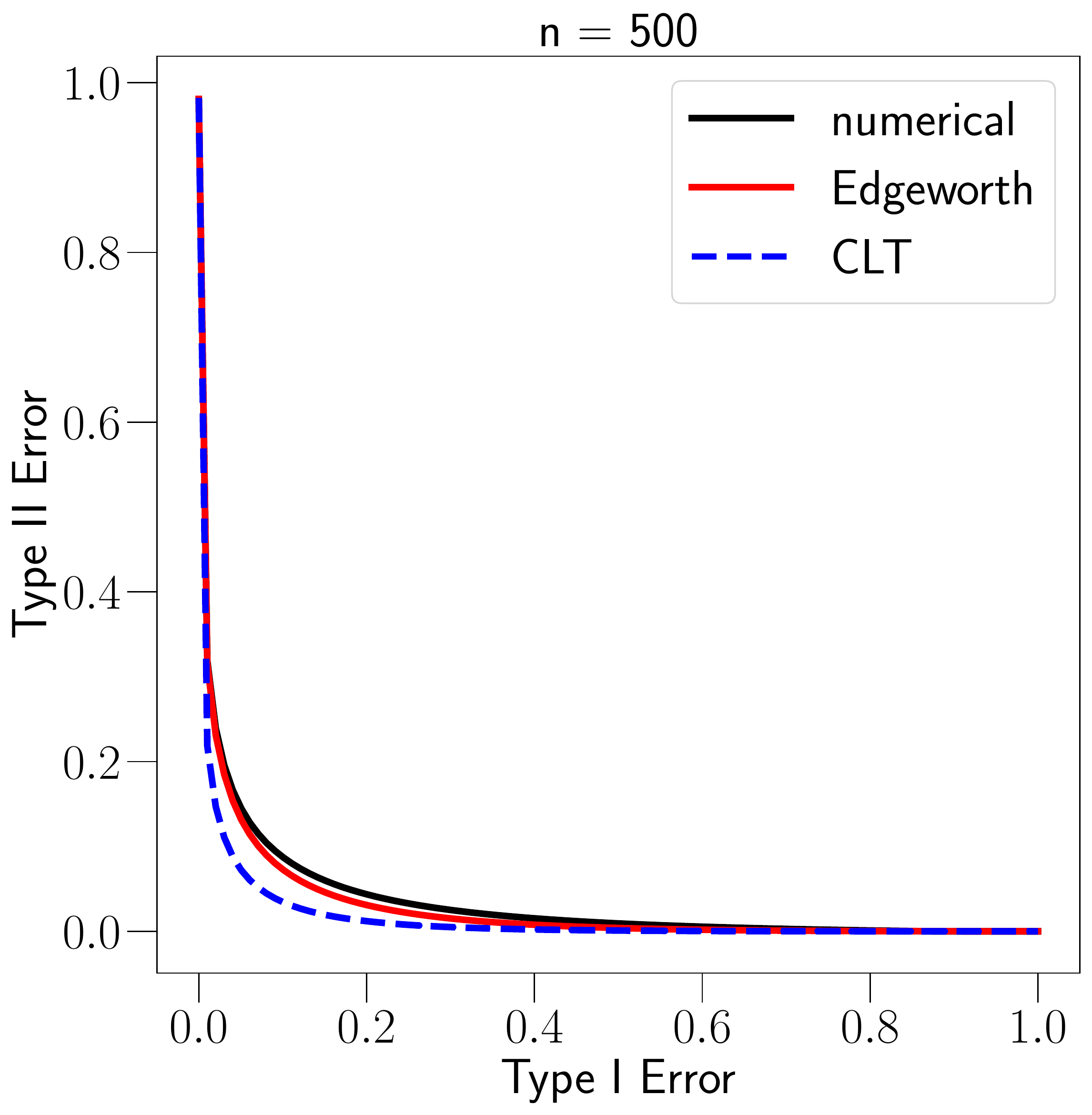}
    \vskip-5pt\caption{The estimation of $0.5/n^{\frac14}(G_1 + \text{Id})^{\otimes n}$.}
    \label{fig:mixture}
\end{figure}

\subsection{Privacy Guarantees for Noisy SGD}
\label{sec:expr_noisy_sgd}

\begin{figure}[tb!]
    \centering
    \includegraphics[width=0.32\columnwidth]{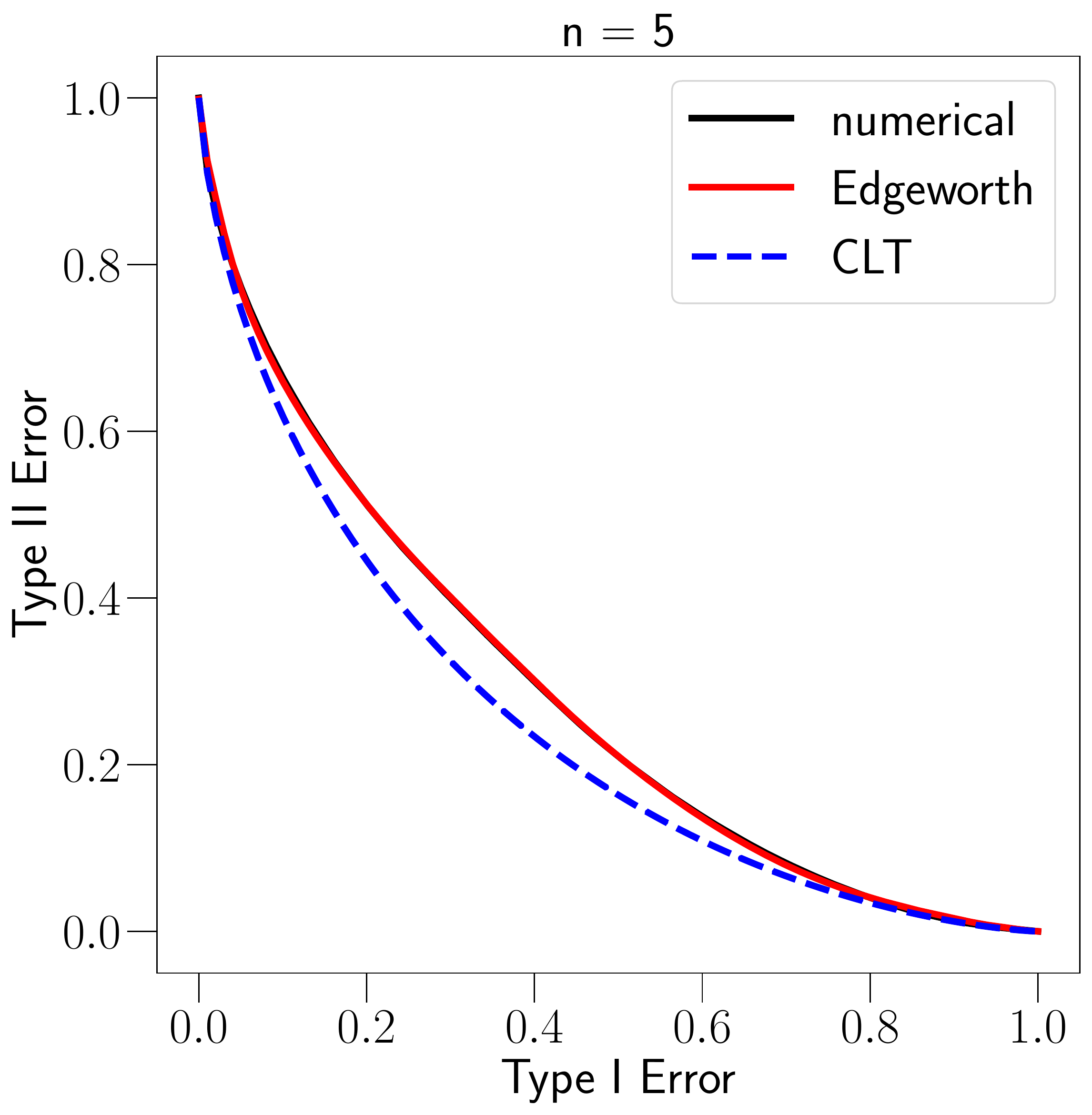}
    \includegraphics[width=0.32\columnwidth]{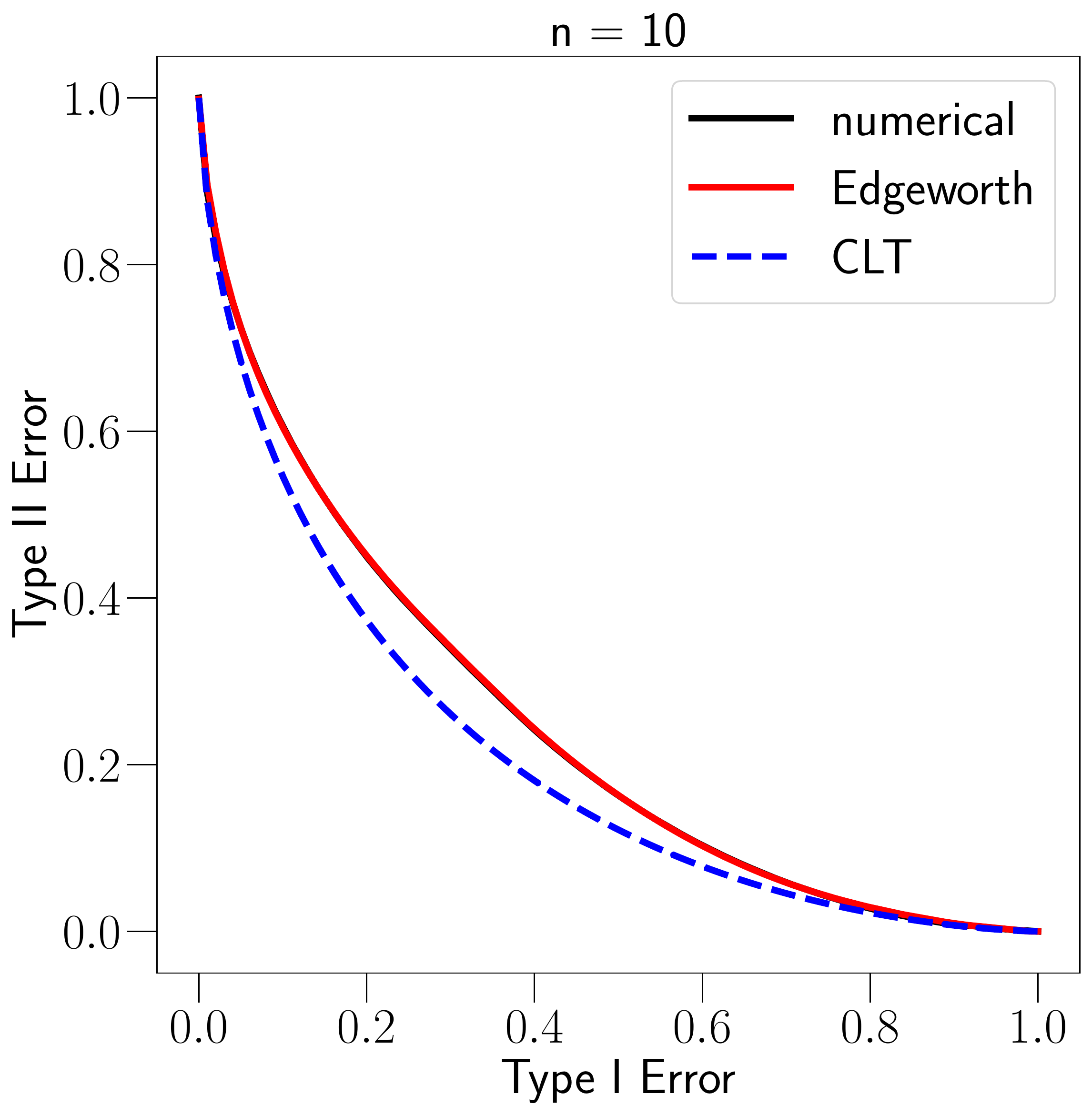}
    \includegraphics[width=0.32\columnwidth]{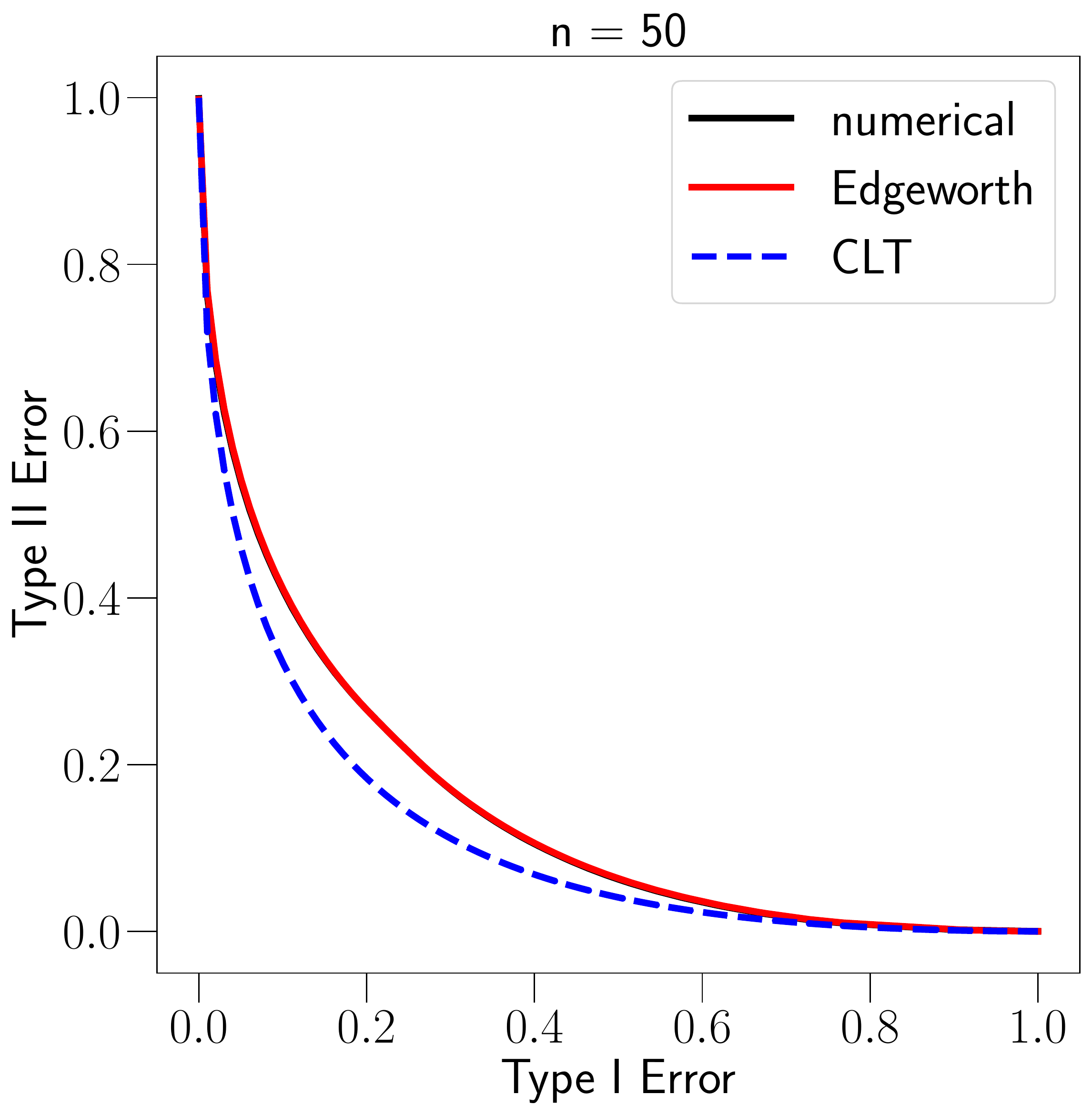}\\
    \includegraphics[width=0.32\columnwidth]{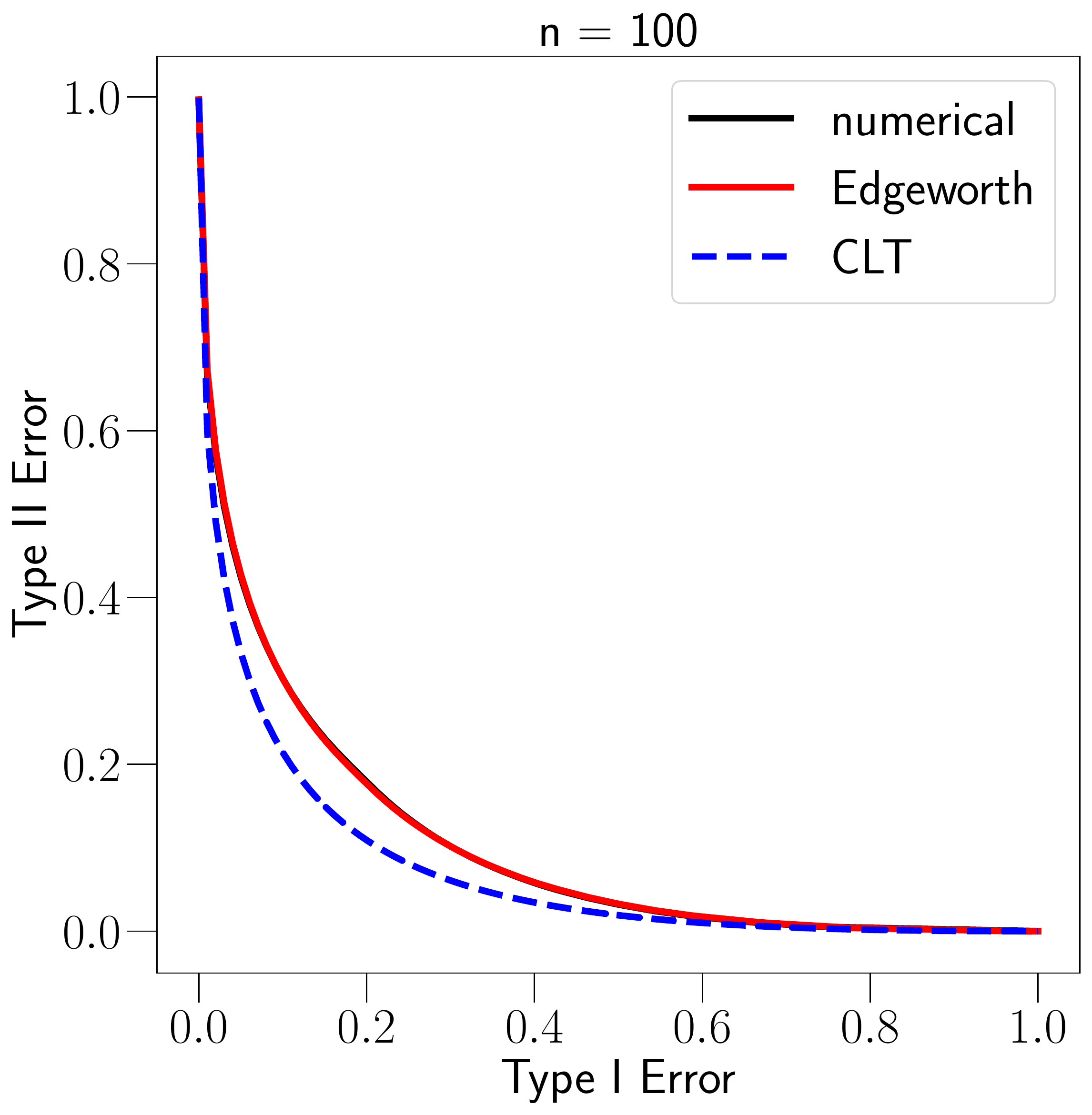}
    \includegraphics[width=0.32\columnwidth]{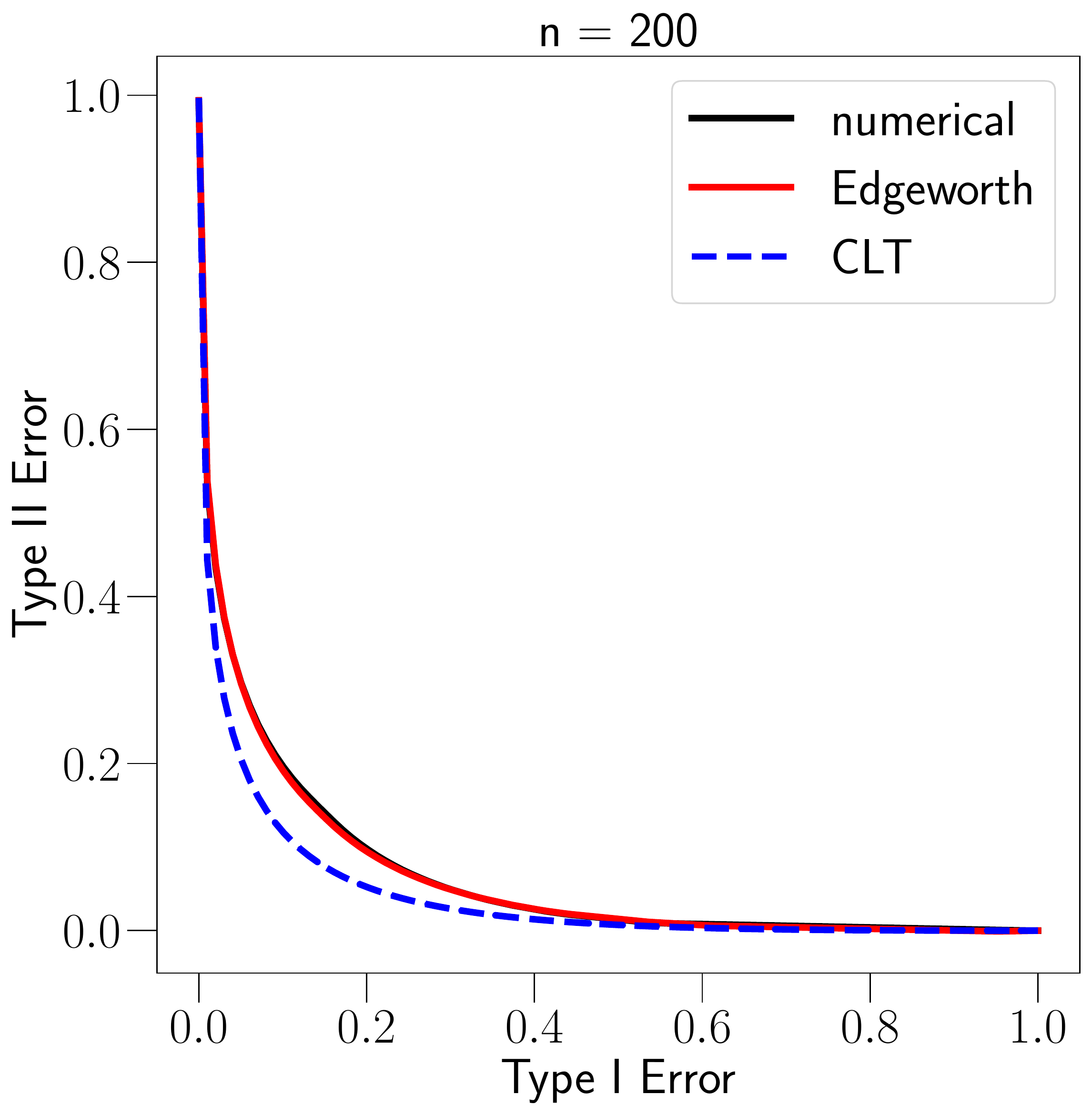}
    \includegraphics[width=0.32\columnwidth]{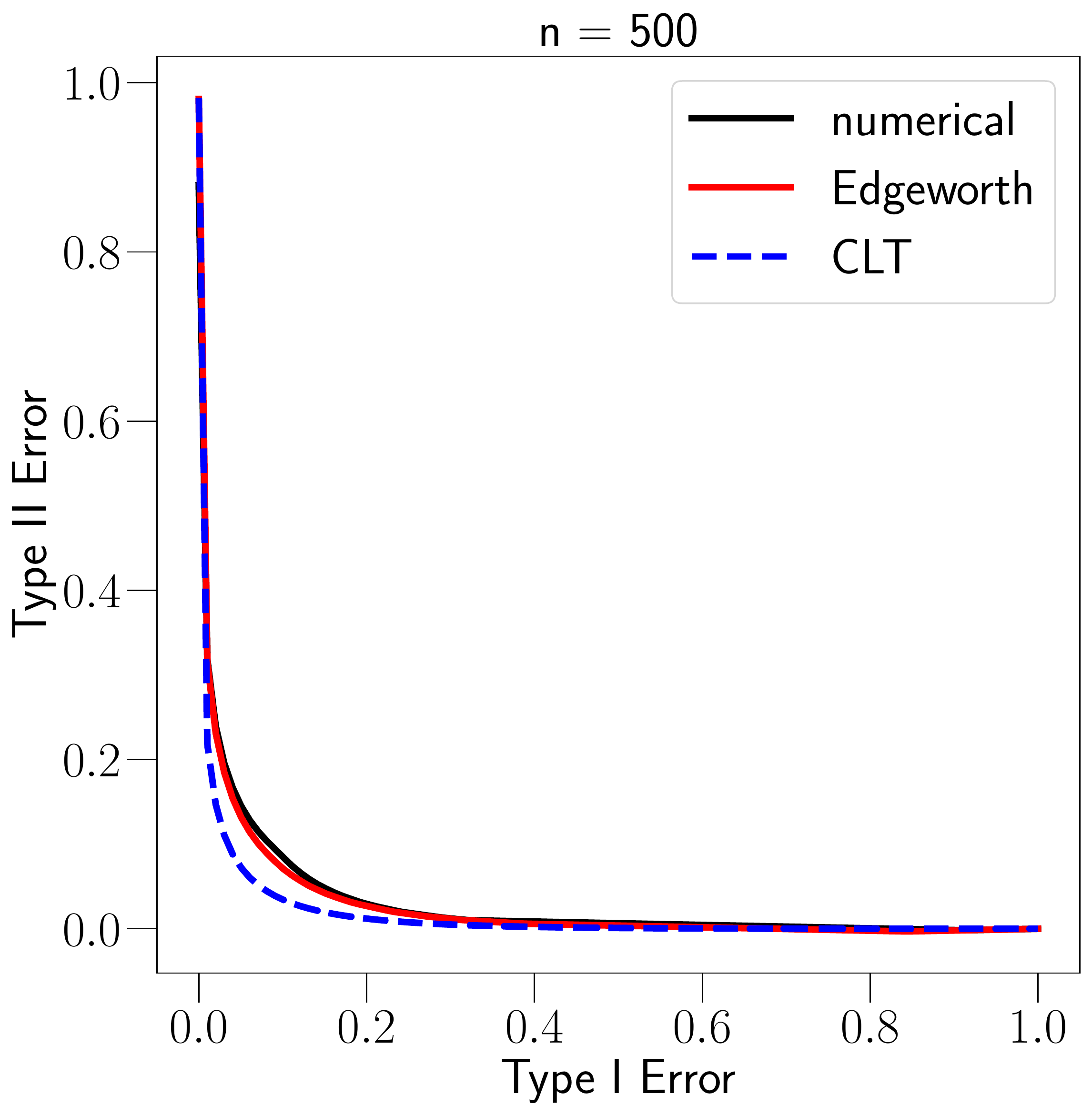}
    \vskip-5pt\caption{The estimation of the privacy bound for $n$-step noisy SGD.
    The sampling rate is $p=0.5/n^{\frac14}$ and the noise scale is $\sigma=1$.}
    \label{fig:mixture_symm}
\end{figure}
We inspect the performance of \EW and \clt on estimating the privacy guarantee for
$n$-step noisy SGD. As introduced in Section~\ref{sec:preliminary}, the privacy bound is
of form $\min\set{g, g^{-1}}^{**}$ where $g=(p G_{1/\sigma} + (1-p)\text{Id})^{\otimes n}$, and
the \clt estimation is $G_\mu$ with $\mu = p\sqrt{n(e^{1/\sigma^2} - 1)}$.  For \EW,
note that $pG_{1/\sigma} + (1-p)\text{Id}$ is the trade-off function of testing the standard
normal distribution $\N(0, 1)$ versus the mixture model $p\N(\frac1\sigma, 1) +
(1-p)\N(0,1)$ (see Appendix~\ref{app:mixture} for the proof).  It follows that $ g = T
(\N(0,1)^{\otimes n}, \set{  p\N(1/\sigma, 1) + (1-p)\N(0,1)  } ^{\otimes n} )$. As a
result, \EW can be applied by exploiting the cumulants of $\N(0, 1)$ and $p\N(1/\sigma, 1) +
(1-p)\N(0,1)$.

\begin{table}[tbh]
    \centering
    \begin{tabular}{c|c|c|c|c|c}
        \toprule & $n=1$ & $n=50$ & $n=100$ & $n=200$ & $n=500$\\ \midrule
        CLT & 0.0004 & 0.0004 & 0.0003 & 0.0003 & 0.0003\\ \hline
        Edgeworth & 0.2504 & 0.2198 & 0.2087 & 0.2257 & 0.3048\\ \hline
        Numerical & 35.7 & 2138.9 & 4980.7 & 7784.8 & 14212.1\\ \bottomrule
    \end{tabular}
    \vskip-5pt\caption{Runtime for computing $0.5/n^{\frac14}(G_1 + \text{Id})^{\otimes n}$.}
    \label{tbl:mixture_runtime}
\end{table}

Here we present the result when $p=0.5/n^{\frac14}$. Since the convergence of \clt
requires the assumption $p\sqrt{n} \rightarrow \nu > 0$ \citep{bu2019deep},  this is the
regime that the performance of \clt does not have theoretical guarantees, and we shall
see in the following experiment that \clt significantly deviates \num.
Nevertheless, even in this case, \EW enjoys a nearly perfect match with \num.
We also investigated the case $p = 0.5/n^{-\frac{1}{2}}$, where the convergence of \clt
is guaranteed. In this regime, we shall see that \EW still outperforms
\clt. The results are deferred to Appendix~\ref{app:sgd_phalf}.

We let $n$ vary from 1 to 500. The noise scale $\sigma$ is set to 1.
Figure~\ref{fig:mixture} presents the estimated $g$ for two representative cases $n=5$
and $n=500$. Figure~\ref{fig:mixture_symm} shows the final estimation after
symmetrization, for a few selected values of $n$. Obviously, \EW is very close to \num
for all the cases.  Unlike the previous experiment, where \clt performs reasonably well
when $n$ reaches 10, here the difference between \clt and the other two remains
large even when $n=500$.

Table~\ref{tbl:mixture_runtime} presents the runtime comparison.  Clearly, \num is not
scalable.  It takes more than two hours to compute 500 compositions, whereas \clt and \EW
can be computed within one second.  The runtime of \num has grown three orders of
magnitude from $n=1$ to $n=500$, yet the runtime of \clt and \EW remains constant for
all the runs.


\subsection*{Acknowledgments}
We are grateful to Yicong Jiang for stimulating discussions in the early stages of this work. This work was supported in part by NSF through CAREER DMS-1847415, CCF-1763314, and CCF-1934876, the Wharton Dean's Research Fund, and NIH through R01GM124111.

\bibliographystyle{alpha}
\bibliography{main}

\clearpage
\appendix
\appendix
\section{The Edgeworth Approximation}
\label{app:edgeworth}
We can apply Edgeworth expansion to approximate $\Ftilde_n$ directly, following the techniques introduced in
\cite{hall2013bootstrap}.  Let us assume $\bx \sim \bQ$. Denote
\begin{equation}
    X_Q = \frac{T_n - \Eq[T_n]}{\sqrt{\var_\bQ(T_n)}} = \frac{\sum_{i=1}^n (L_i - \mu_i)}{\sqrt{\sum_{i=1}^n \sigma^2_i}},
\end{equation}
where $\mu_i$ and $\sigma_i^2$ are the mean and variance of $L_i$ under the distribution
$Q_i$. The characteristic function of $X_Q$ is
\[
    \chi_n(t) = \exp \left( \sum_{i=1}^\infty \ktilde_r(X_Q) \frac{ (it)^r}{r!} \right),
\]
where $\ktilde_r(X_Q)$ is the $r$-th cumulant of $X_Q$. Details of how to compute the
cumulants are summarized in Appendix~\ref{app:cumulant}. Let $\sigma_n = \sqrt{\sum_{i=1}^n
\sigma^2_i}$. Particularly we have
\begin{equation}
    \begin{aligned}
        \ktilde_1(X_Q) & = \Eq(X_Q) = 0, \\
        \ktilde_2(X_Q) & = \var_\bQ(X_Q) = 1, \\
                     & \vdots \\
        \ktilde_r(X_Q) & = \ktilde_r\left(\sigma_n^{-1} \sum_{i=1}^n (L_i - \mu_i)
        \right) \\
        & = \sigma_n^{-r} \sum_{i=1}^n \ktilde_r(L_i), \; \; \forall r > 2.
    \end{aligned}
\end{equation}
We will denote the sum of $n$ cumulants by $ \Ktilde_r = \sum_{i=1}^n \ktilde_r(L_i)$.
Under the series expansion of the exponential function, we will have
\begin{equation}
    \label{eq:characteristic_func}
    \begin{aligned}
         \chi_n(t)  = &
        \exp\left(-\frac{t^2}{2}\right) \exp \left( \sum_{r=3}^\infty \frac{\sigma_n^{-r}}{r!} \Ktilde_r (it)^r \right) \\
          \approx &
        \exp\left(-\frac{t^2}{2}\right) \exp \left( \sum_{r=3, 4} \frac{\sigma_n^{-r}}{r!} \Ktilde_r (it)^r \right) \\
         \approx &
        \exp\left(-\frac{t^2}{2}\right) \bigg( 1
            + \sigma_n^{-3} \cdot \overbrace{\frac{1}{6}  \Ktilde_3 (it)^3}^{r_1(it)} \\
            & + \sigma_n^{-4} \cdot \overbrace{\frac{1}{24} \Ktilde_4 (it)^4}^{r_2(it)}
            + \sigma_n^{-6} \cdot \overbrace{\frac{1}{72} \Ktilde_3^2 (it)^6}^{r_3(it)} \bigg). \\
    \end{aligned}
\end{equation}
Since $\chi_n(t) = \int e^{ith} d\Ftilde_n(h)$ and $e^{-t^2/2} = \int e^{ith} d\Phi(h)$, we can obtain the corresponding ``inverse'' expansion:
\begin{equation}
    \label{eq:ftilde_inter1}
    \Ftilde_n(h)  \approx  \Phi(h)  + \sigma_n^{-3} \cdot R_1(h) + \sigma_n^{-4} \cdot
    R_2(h) + \sigma_n^{-6} \cdot R_3(h),
\end{equation}
and $R_j(h)$ is a function whose Fourier-Stieljes transform equals $r_j(it)e^{-t^2/2}$:
\[
    \int_{-\infty}^{\infty} e^{ith} dR_j(h) = r_j(it) e^{-t^2/2}.
\]
Let $D$ denote the differential operator $d/dh$. We have
\[
    e^{-t^2/2} = (-it)^{-j} \int_{-\infty}^{\infty} e^{ith} d \left\{ D^j \Phi(h) \right\}
\]
and hence
\[
    \int_{-\infty}^{\infty} e^{ith} d \left\{ (-D)^j \Phi(h) \right\} = (it)^j e^{-t^2/2}.
\]
Let us interpret $r_j(-D)$ as a polynomial in $D$, we then obtain
\[
    \int_{-\infty}^{\infty} e^{ith} d \left\{ r_j(-D) \Phi(h) \right\} = r_j(it)
    e^{-t^2/2}.
\]
Consequently,
\begin{equation}
    \label{eq:ftilde_inter2}
    R_j(h) = r_j(-D) \phi(h).
\end{equation}
It is well known that for $j \geq 1$,
\begin{equation}
    \label{eq:ftilde_inter3}
    (-D)^j\Phi(h) = -He_{j-1}(h) \phi(h)
\end{equation}
and $He_j$s are the Hermite polynomials:
\begin{equation}
    \label{eq:ftilde_inter4}
    \begin{aligned}
        He_0(h) & = 1, \;\; \\
        He_1(h) & = h, \;\; \\
        He_2(h) & = h^2 - 1, \\
        He_3(h) & = h^3 - 3h, \;\; \\
        He_4(h) & = h^4 - 6h^2 + 3, \\
        He_5(h) & = h^5 - 10h^3 + 15h, \;\; \\
        He_6(h) & = h^6 - 15h^4 + 45 h^2 - 15, \\
        He_7(h) & = h^7 - 21h^5 + 105 h^3, \\
                & \ldots
    \end{aligned}
\end{equation}
Combine equations~\ref{eq:ftilde_inter1}, \ref{eq:ftilde_inter2},
\ref{eq:ftilde_inter3} and \ref{eq:ftilde_inter4} we can deduce
the final result:
\begin{equation}
    \label{app:approx_Ftilde}
    \begin{aligned}
        & \Ftilde_n(h) && \approx && \Phi(h)  + \sigma_n^{-3} \cdot -\frac{1}{6}  \Ktilde_3 (h^2 - 1) \phi(h) \\
        & && && + \sigma_n^{-4} \cdot -\frac{1}{24} \Ktilde_4 (h^3 - 3h) \phi(h) \\
        & && && + \sigma_n^{-6} \cdot -\frac{1}{72} \Ktilde_3^2 (h^5 - 10h^3 + 15h) \phi(h).
    \end{aligned}
\end{equation}

In \ref{eq:characteristic_func}, the truncation happens in both the second and third line.
In the second line, we truncated terms where $r \geq 5$. In the following line, we apply
the series expansion to the exponential function, and we stopped after taking
$t_1 := \sigma^{-3}_n \cdot \frac{1}{6} \Ktilde_3(it)^3$,
$ t_2: = \sigma^{-4}_n \cdot \frac{1}{24} \Ktilde_4(it)^4 $ and the square of $t_1$.

The error stems from truncating $r \geq 5$ terms in the second line  will be dominated by
$ \dfrac{1}{120} \sigma_n^{-5} \Ktilde_5 (it)^5$ in the series expansion.
The error stems from truncating the expansion of $r=3, 4$ terms in the following line will be dominated by
the square of $t_2$: $\sigma^{-8}_n \cdot \frac{1}{576} \Ktilde^2_4(it)^8 $.

Since all $L_i$'s are identically distributed, the cumulants of $L_1, \ldots L_n$ take
the same value for any fixed order. Therefore,
$\sigma_1 = \cdots = \sigma_n = \sigma$ and $\ktilde_r = \ktilde_r(L_1) = \cdots =
\ktilde_r(L_n)$. As a consequence, we have $\sigma_n = \sqrt{n} \sigma$ and $\Ktilde_r = n \ktilde_r$.
This leads to
\begin{equation}
    \begin{aligned}
        & \sigma^{-3}_n \cdot \Ktilde_3(it)^3 \sim n^{-1/2} (it)^3, \\
        & \sigma^{-4}_n \cdot \Ktilde_4(it)^4 \sim n^{-1}   (it)^4, \\
        & \sigma^{-6}_n \cdot \Ktilde^2_3(it)^6 \sim n^{-1} (it)^6, \\
        & \sigma^{-8}_n \cdot \Ktilde^2_4(it)^8 \sim n^{-2} (it)^8, \\
        & \sigma^{-5}_n \cdot \Ktilde_5(it)^5 \sim n^{-3/2} (it)^5. \\
    \end{aligned}
\end{equation}
Hence the error for approximating $\chi_n(t)$ is upper bounded by
$O\left(n^{-2} (it)^8 + n^{-3/2}(it)^5 \right)$.
Next, we connect the characteristic function to CDF $\Ftilde_n(h)$.
From equations~\ref{eq:ftilde_inter2} and \ref{eq:ftilde_inter3},
we know the error term will be transformed into
$O\left( n^{-2} He_7(h) + n^{-3/2}He_4(h) \right)$ as approximating
$\Ftilde_n(h)$, which is
$O\left( n^{-2} h^7 + n^{-3/2} h^3 \right)$.

\section{Computing Cumulants From Moments}
\label{app:cumulant}
The cumulants of a random variable $X$ are defined using the cumulant-generating
function $K(t)$. It is the natural logarithm of the moment-generating function:
\[
    K(t) = \log \E\left(e^{tX}\right),
\]
and the cumulants are the coefficients in the Taylor expansion of $K(t)$ about the
origin:
\[
    K(t) = \log \E\left(e^{tX}\right) = \sum_{r=0}^\infty \kappa_r t^r / r!.
\]
For any integer $r \geq 0$, the $r$-th order non-central moment of $X$ is $\mu_r = \E(X^r)$. Recall the Taylor expansion of the moment-generating function $M(t)$ about the origin
\[
    M(t) = \E\left(e^{tX}\right) = \sum_{r=0}^\infty \mu_r t^r / r!
    = \exp\left( K(t)\right).
\]
The cumulants can be recovered in terms of the moments and vice versa. In general,
\[
\kappa_r=\sum _{k=1}^{r}(-1)^{k-1}(k-1)!B_{r,k}(\mu_1,\ldots ,\mu_{r-k+1})
\]
where $B_{n,k}$ are Bell polynomials. The relationship between the first few cumulants and moments is as the following:
\[
\begin{aligned}
& \kappa_0 = 0,\\
& \kappa_1 = \mu_1,\\
& \kappa_2 = \mu_2 - \mu_1^2,\\
& \kappa_3 = \mu_3 - 3\mu_2 \mu_1 + 2\mu^3_1,\\
& \kappa_4 = \mu_4 - 4 \mu_3 \mu_1 - 3\mu^2_2 + 12 \mu_2 \mu_1^2 - 6 \mu^4_1.\\
\end{aligned}
\]

\section{$\N(0,1)$ vs $p\N(\mu,1) + (1-p)\N(0,1))$}
\label{app:mixture}
Let $P$ be the standard normal distribution $\N(0,1)$ and $Q$
be a mixture model $p \N(\mu, 1) + (1-p) \N(0,1)$ with $\mu\geqslant 0$.
We now show that
\begin{lemma}
    $$T(P,Q) = p G_\mu + (1-p) \mathrm{Id}.$$
\end{lemma}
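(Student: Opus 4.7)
The plan is to invoke the Neyman--Pearson lemma to identify the optimal tests explicitly, then compute the trade-off function in closed form. Write $p(x) = \phi(x)$ for the density of $P$ and $q(x) = p\,\phi(x-\mu) + (1-p)\phi(x)$ for the density of $Q$. The likelihood ratio is
\[
\frac{q(x)}{p(x)} = p \cdot \frac{\phi(x-\mu)}{\phi(x)} + (1-p) = p \cdot e^{\mu x - \mu^2/2} + (1-p).
\]
Since $\mu \geq 0$, this is a monotone non-decreasing function of $x$, so for each significance level $\alpha \in [0,1]$ the optimal Neyman--Pearson test is the threshold test that rejects $H_0$ when $X > t$ for some $t = t(\alpha)$.

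Next I would pin down the threshold and compute both errors. Under $P$, the type I error is
\[
\alpha = \P_P(X > t) = 1 - \Phi(t),
\]
so $t = \Phi^{-1}(1-\alpha)$. Under $Q$, the type II error is
\[
\beta(\alpha) = \P_Q(X \leq t) = p\,\Phi(t-\mu) + (1-p)\Phi(t).
\]
Substituting $t = \Phi^{-1}(1-\alpha)$ gives $\Phi(t) = 1 - \alpha = \mathrm{Id}(\alpha)$ and $\Phi(t-\mu) = \Phi(\Phi^{-1}(1-\alpha) - \mu) = G_\mu(\alpha)$ by definition of $G_\mu$. Hence
\[
T(P,Q)(\alpha) = \beta(\alpha) = p\, G_\mu(\alpha) + (1-p)\,\mathrm{Id}(\alpha),
\]
as claimed.

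I do not anticipate any real obstacle. The only subtlety is monotonicity of the likelihood ratio, which hinges on $\mu \geq 0$; for $\mu = 0$ the claim reduces to $T(P,P) = \mathrm{Id}$ since $Q = P$ in that case. One could alternatively write the proof without invoking Neyman--Pearson by directly using the fact, noted earlier in the paper, that $T(\N(0,1), \N(\mu,1)) = G_\mu$, and that a mixture of distributions yields the corresponding convex combination of trade-off functions when tested against a common alternative whose likelihood ratio is monotone; but the direct Neyman--Pearson computation above is the most transparent route.
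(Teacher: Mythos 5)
Your proof is correct and matches the paper's argument essentially line for line: compute the likelihood ratio $p\,e^{\mu x - \mu^2/2} + (1-p)$, note monotonicity (using $\mu \geq 0$) to reduce to threshold tests, express the type I and type II errors as $1-\Phi(t)$ and $p\Phi(t-\mu)+(1-p)\Phi(t)$, then substitute $t=\Phi^{-1}(1-\alpha)$. The only cosmetic difference is that the paper writes out the Gaussian exponent explicitly before simplifying, while you state the simplified form directly; this is not a substantive distinction.
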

\begin{proof}
    The likelihood ratio between $Q$ and $P$ is
    $$p\e^{-\frac{1}{2}(x-\mu)^2+\frac{1}{2}x^2}+1-p = p\e^{\mu x-\frac{1}{2}\mu^2}+1-p.$$
    Since $\mu\geqslant 0$, likelihood ratio tests are thresholding, i.e., $\{x:x>h\}$. The type I and type II errors are
    \begin{align*}
        \alpha &= P\{x:x>h\} = 1-\Phi(h),\\
        \beta &= Q\{x:x\leqslant h\} \\
        &= p\E_{x\sim\N(\mu,1)}[1_{\{x:x\leqslant h\}}]+(1-p)\E_{x\sim\N(0,1)}[1_{\{x:x\leqslant h\}}]\\
        &= p\Phi(h-\mu)+(1-p)\Phi(h).
    \end{align*}
    Inverting the first formula, we have $h=\Phi^{-1}(1-\alpha)$. So
    $$\beta = p\Phi(h-\mu)+(1-p)\Phi(h) = p\Phi(\Phi^{-1}(1-\alpha)-\mu)+(1-p)(1-\alpha)$$
    Making use of the known expression $G_\mu(\alpha)=\Phi\big(\Phi^{-1}(1-\alpha)-\mu\big)$ and $\mathrm{Id}(\alpha) = 1-\alpha$, we have
    $$T(P,Q)(\alpha) = \beta = p G_\mu(\alpha) + (1-p) \mathrm{Id}(\alpha).$$
\end{proof}

\section{Details of the Numerical Method} 
\label{sec:details_of_the_numerical_method}
\subsection{Proof of Lemma~\ref{lem:delta_1}}
\begin{proof}
    By definition of convex conjugacy, $\delta\geqslant\delta_1(\eps)$ if and only if $f(x)\geqslant 1-\delta-\e^\eps x$ for all $x\in[0,1]$. Since $f=T(P,Q)$ characterizes optimal testing rules, $f(x)\geqslant 1-\delta-\e^\eps x$ for any $x\in[0,1]$ if and only if for any event $E$, $Q[E]\leqslant \e^\eps P[E]+\delta$. That is,
    \begin{align*}
        \delta_1(\eps) &= \min\{\delta: Q[E]\leqslant \e^\eps P[E]+\delta, \forall E\}\\
        &= \max_{E} Q[E]-\e^\eps P[E]\\
        &= \max_{E}\int_{E} \big[q(x)-\e^\eps p(x)\big] \diff \mu(x).
    \end{align*}
    Obviously, the maximum is attained at the event that the integrand being non-negative. That is, $E=\{x:q(x)-\e^\eps p(x)\geqslant 0\}$. Therefore,
    $$
        \delta_1(\eps) = \int \big(q-\e^\eps p\big)_+ \diff \mu.$$
\end{proof}

%
\subsection{Proof of Lemma~\ref{lem:delta_n}}
\begin{proof}
    By definition of $\otimes$ and Lemma \ref{lem:delta_1}, we have
    \begin{align*}
        \delta_{\otimes}(\eps) &= 1+(f_1\otimes f_2)^*(-\e^\eps) \\
        &= 1+\big(T(P_1\times P_2,Q_1\times Q_2)\big)^*(-\e^\eps)
        && (\text{Def of } \otimes)\\
        &= \iint \big(q_1(x)q_2(y)-\e^\eps p_1(x)p_2(y)\big)_+ \diff x \diff y
        && (\text{Lemma } \ref{lem:delta_1})\\
        &=\iint q_2(y)\cdot\Big(q_1(x)-\e^\eps p_1(x)\cdot\tfrac{p_2(y)}{q_2(y)}\Big)_+ \diff x \diff y
        && (q_2(y)\geqslant 0)\\
        &=\iint q_2(y)\cdot\Big(q_1(x)-\e^{\eps-L_2(y)} p_1(x)\Big)_+ \diff x \diff y
        && (\text{Def of } L_2)\\
        &=\int q_2(y)\cdot \Big[\int \big(q_1(x)-\e^{\eps-L_2(y)} p_1(x)\big)_+ \diff x\Big] \diff y
        && (\text{Fubini})\\
        &= \int q_2(y)\cdot \delta_1\big(\eps-L_2(y)\big) \diff y.
        && (\text{Lemma }\ref{lem:delta_1}\text{ on }\delta_1)
    \end{align*}
\end{proof}

\section{Privacy Guarantees for Noisy SGD with Sampling Rate $p=\frac{0.5}{\sqrt{n}}$}
\begin{figure}[tbh]
\label{app:sgd_phalf}
    \centering
    \includegraphics[width=0.24\columnwidth]{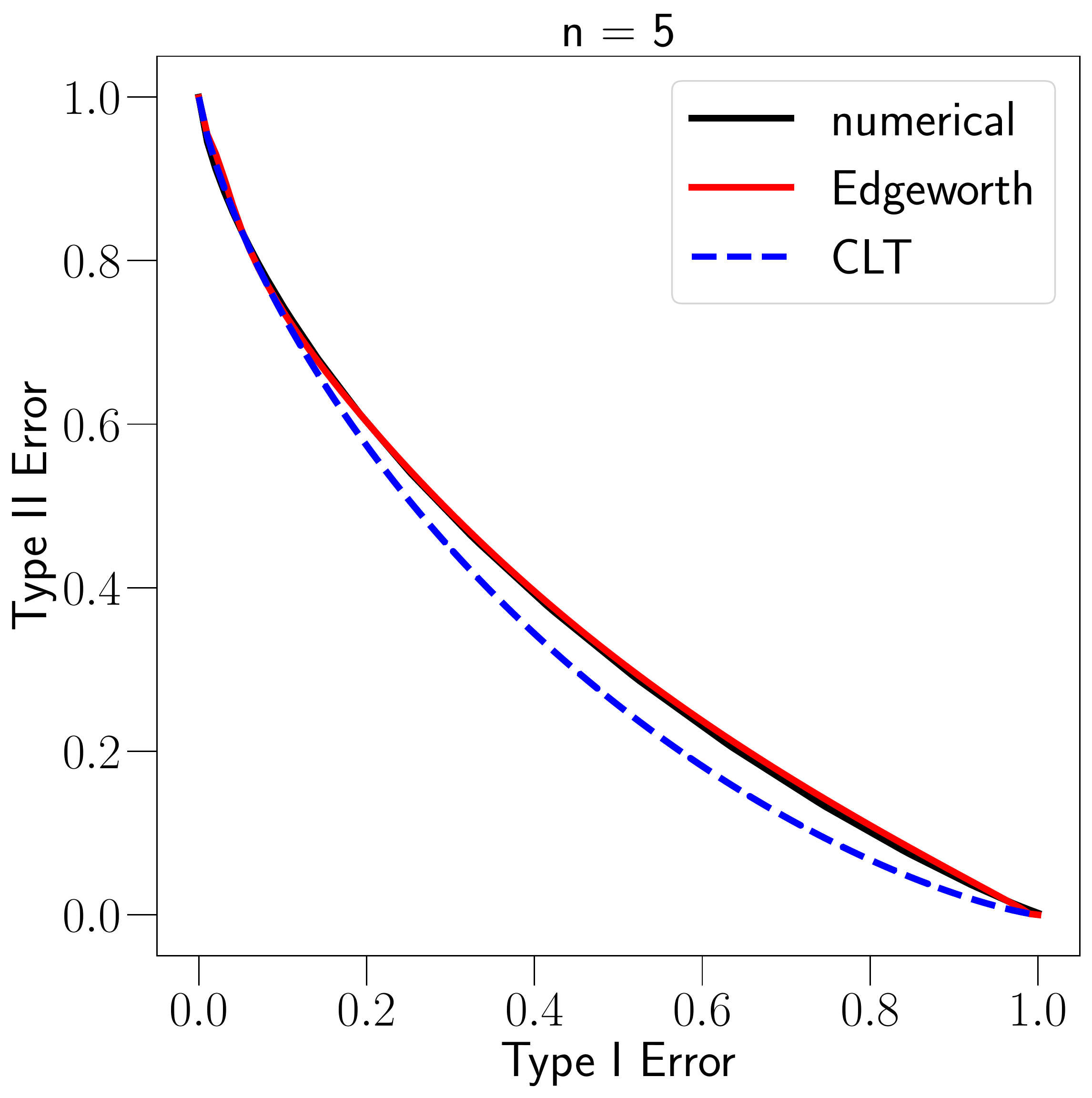}
    \includegraphics[width=0.24\columnwidth]{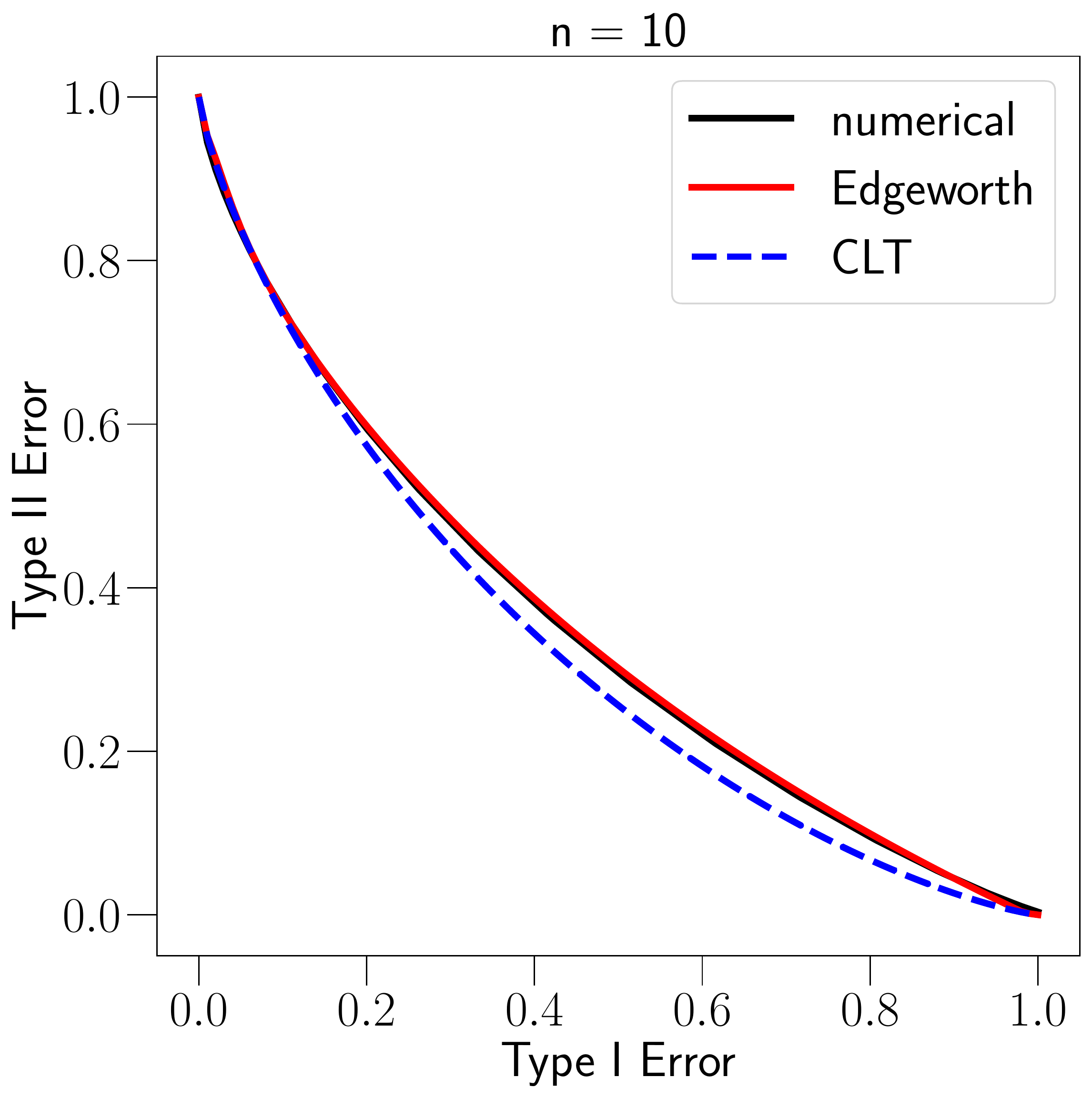}
    \includegraphics[width=0.235\columnwidth]{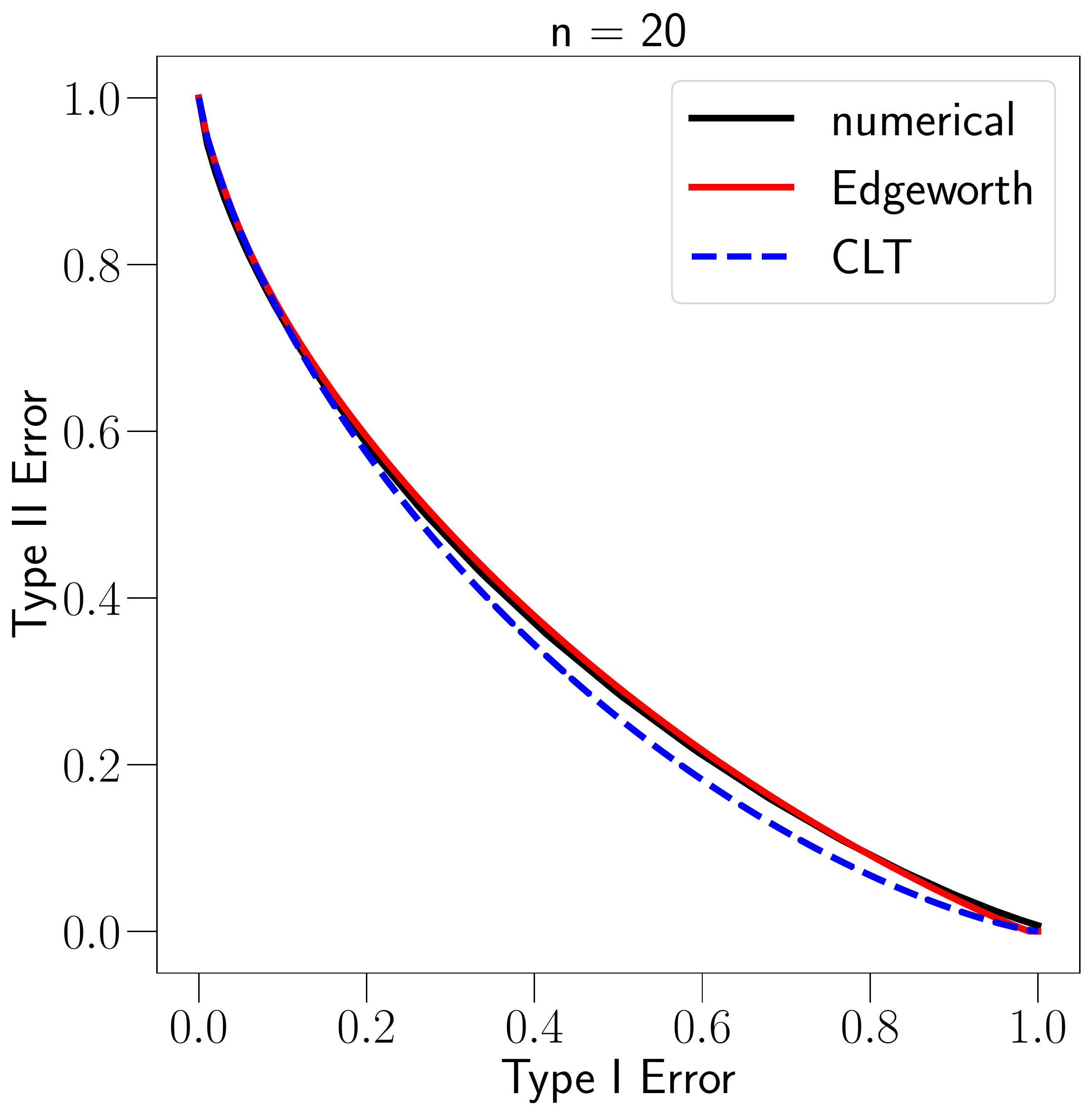}
    \includegraphics[width=0.235\columnwidth]{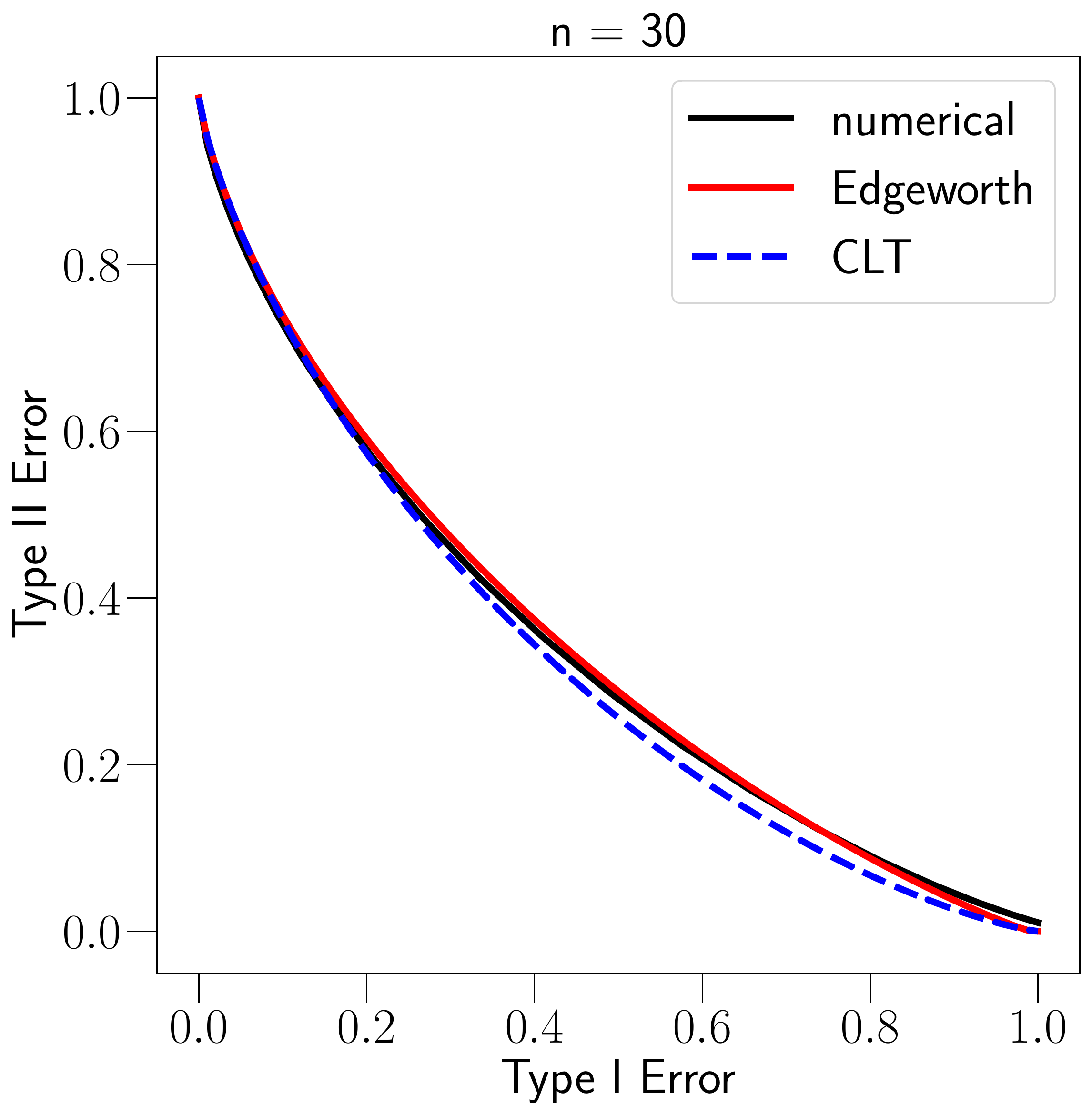}
    \vskip-5pt\caption{The estimation of $0.5/n^{\frac12}(G_1 + \text{Id})^{\otimes n}$.}
    \label{fig:mixture_phalf}
    \includegraphics[width=0.24\columnwidth]{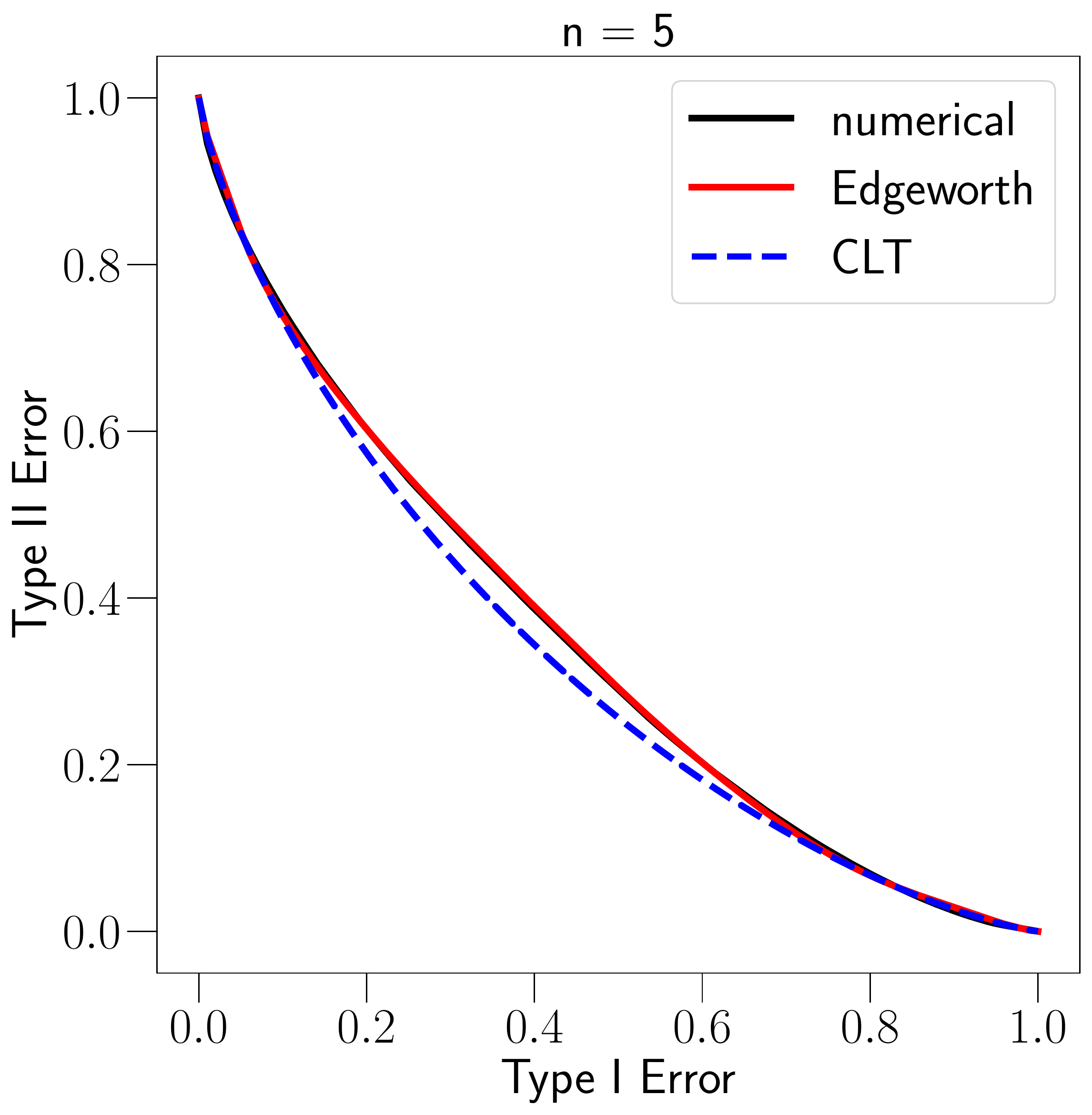}
    \includegraphics[width=0.24\columnwidth]{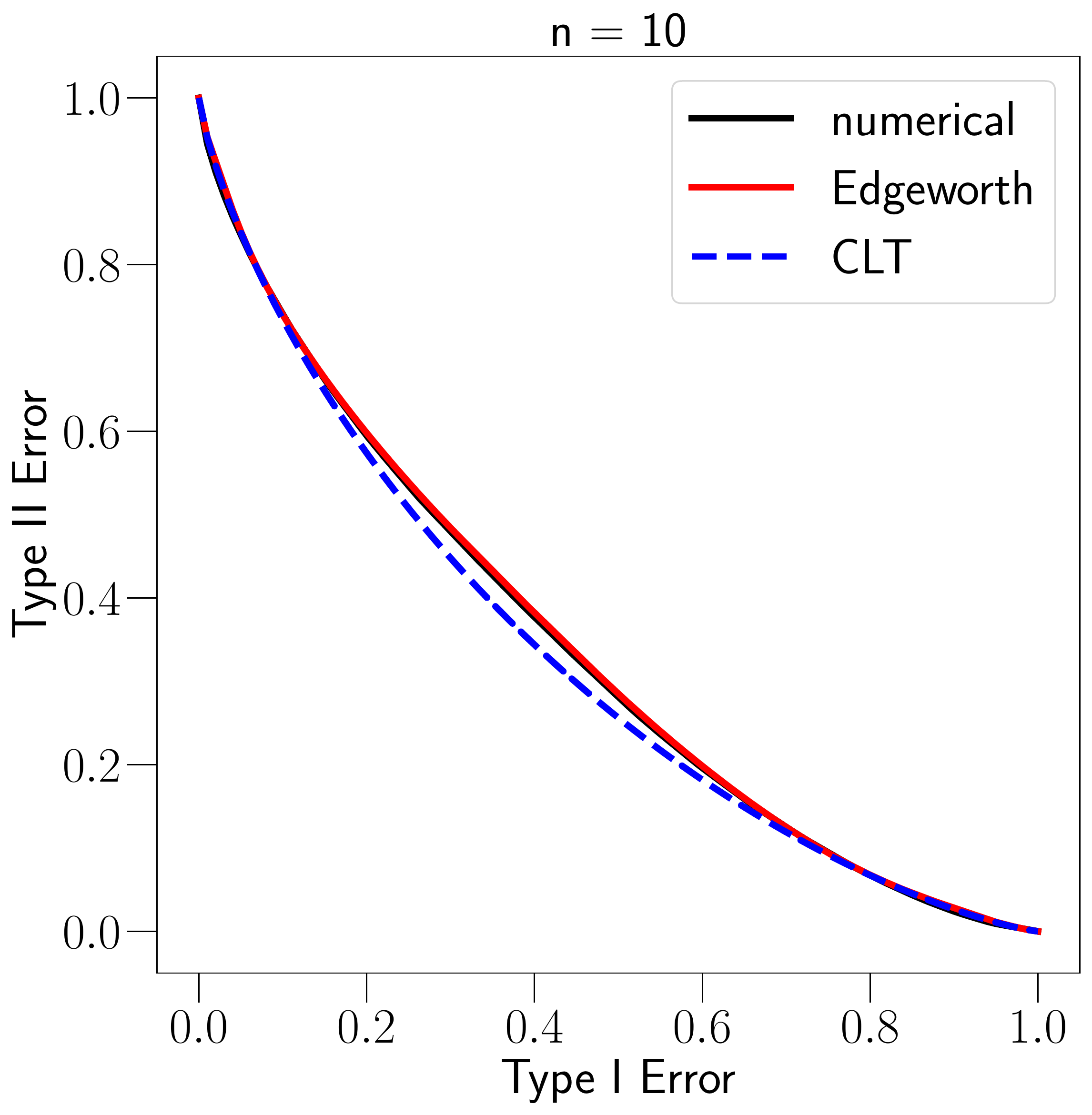}
    \includegraphics[width=0.24\columnwidth]{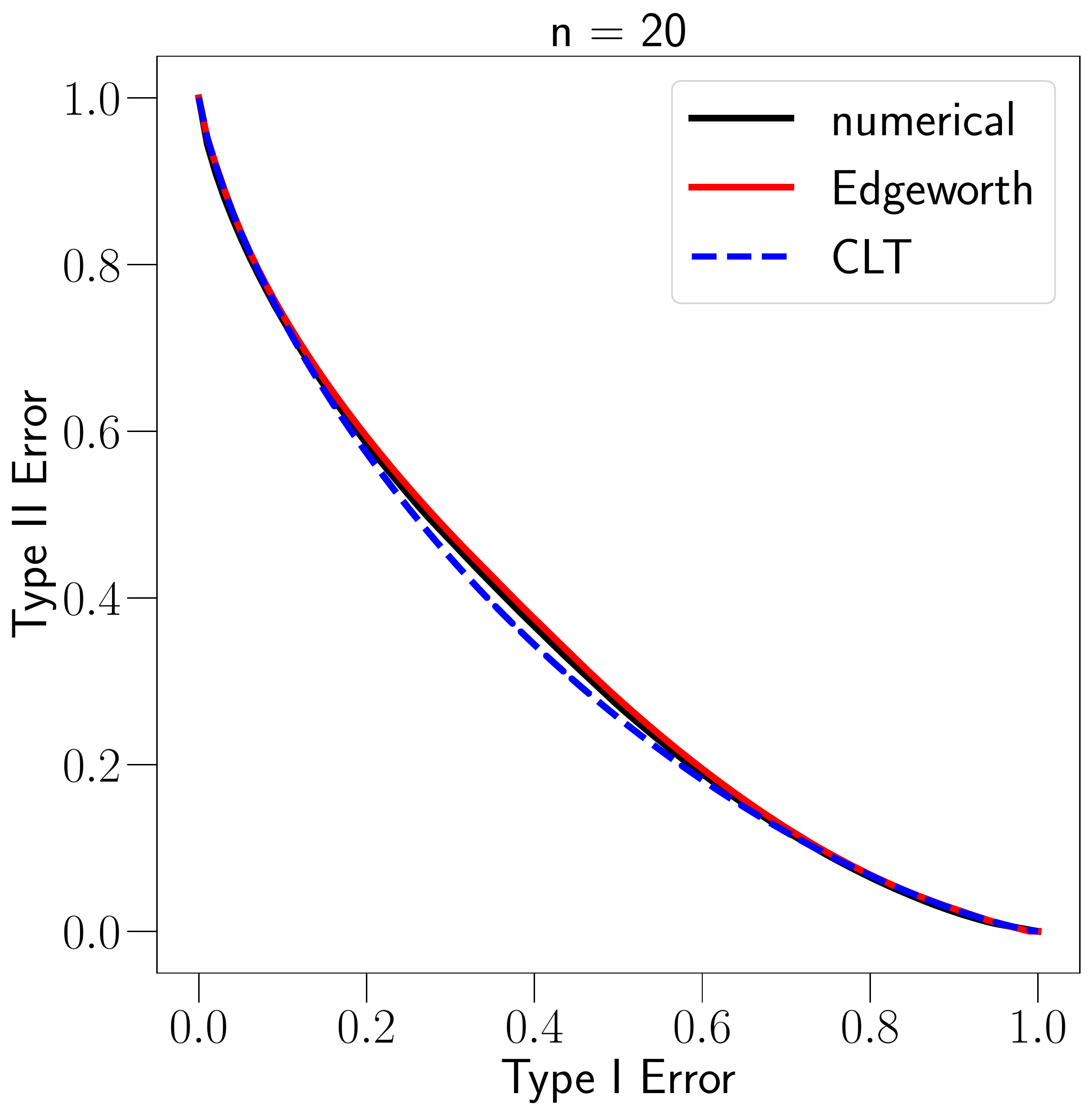}
    \includegraphics[width=0.24\columnwidth]{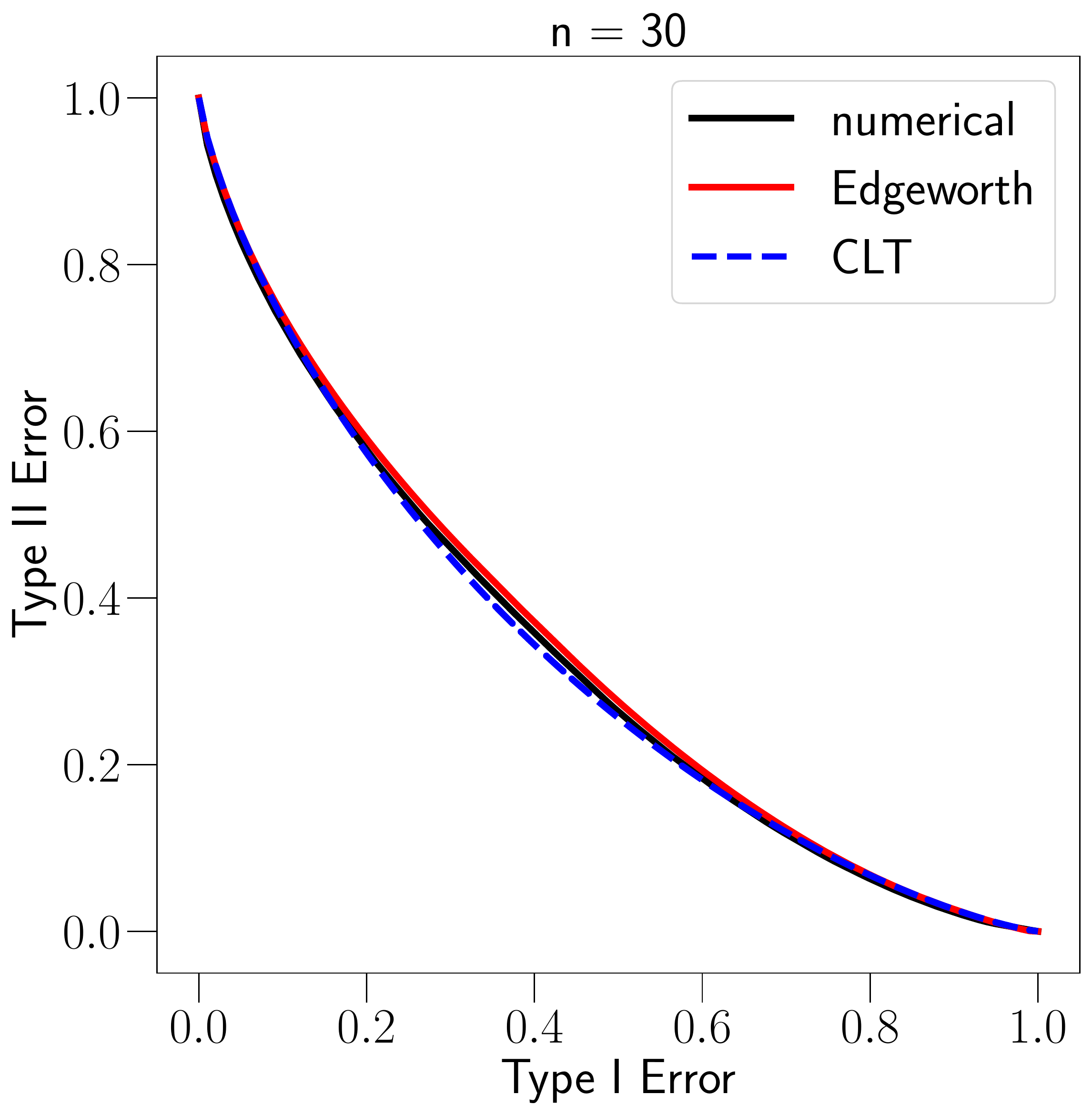}
    \vskip-5pt\caption{The estimation of the privacy bound for $n$-step noisy SGD.
    The sampling rate is $p=0.5/n^{\frac12}$ and the noise scale is $\sigma=1$.}
    \label{fig:mixture_symm_phalf}
\end{figure}

In Section~\ref{sec:expr_noisy_sgd} we present the result when the sampling rate
$p=0.5/n^{\frac14}$. Since the convergence of \clt requires the assumption $p\sqrt{n}
\rightarrow \nu > 0$ \citep{bu2019deep}, that is a regime where the performance of \clt
does not have theoretical guarantees. Here we present the results when $p =
0.5/n^{\frac12}$, where the convergence of \clt is guaranteed. However, we still
observe that \EW outperforms \clt.  See Figure~\ref{fig:mixture_phalf} and
\ref{fig:mixture_symm_phalf} for the comparison.

\end{document}